\documentclass{article}

\usepackage{PRIMEarxiv}
\usepackage{booktabs}
\usepackage{graphicx}
\usepackage{amssymb}
\usepackage{longtable}
\usepackage{mathtools}
\usepackage{pdflscape}
\usepackage{amsthm}

\newtheorem{theorem}{Theorem}[section]

\newtheorem{proposition}[theorem]{Proposition}
\newtheorem{definition}{Definition}

\usepackage{algorithm}
\usepackage{algpseudocode}
\usepackage{natbib}
\makeatletter
\renewcommand\@biblabel[1]{}
\makeatother

\newcommand{\Ent}{\mathsf{H}}

\title{Loss-Shift Transfer via Bayes Quotients
}

\author{
  Vasileios Sevetlidis\\
  Athena Research Center, Kimmeria Campus, Xanthi, Greece \\
  Democritus University of Thrace, Vas. Sofias Campus, Xanthi, Greece \\
  International Hellenic University, Serres, Greece\\
  \texttt{vasiseve@athenarc.gr} \\
}

\begin{document}
\maketitle

\begin{abstract}
Transfer learning is usually studied as a consequence of distribution shift. This paper identifies an orthogonal failure mode in which the data distribution is fixed and the loss changes. This setting is called \emph{loss shift}. A loss determines which information in \(X\) is Bayes-relevant, and two losses may therefore require different representations even under the same joint law \(P(X,Y)\). The idea is formalized using Bayes quotients, which allow losses to be ordered by refinement. In the Bayes-quotient formulation, strict refinement gives an immediate qualitative obstruction. A source-minimal representation for a coarser loss is insufficient for a strictly finer target loss. For finite-output log loss, this obstruction becomes an exact quantitative identity. The excess risk is the conditional information about \(Y\) discarded by the representation. Experiments in controlled, learned, synthetic-image, and real-image settings show the predicted effect, i.e., classification-equivalent representations can have different optimal log-loss performance under a fixed data distribution.
\end{abstract}

\keywords{ loss shift \and transfer learning \and Bayes quotients \and Bayes-sufficient representations \and log-loss transfer \and conditional mutual information
}

\section{Introduction}
\label{sec:introduction}

Transfer learning is usually studied as learning under change, i.e., a model trained in a source setting is reused in a target setting whose distribution, domain, or task may differ \citep{pan2010survey,zhuang2021comprehensive}. Existing theory therefore focuses on domain discrepancy, task relatedness, invariance, and shared structure \citep{ben2007analysis,ben2010theory,quionero2009dataset, ganin2016domain,bengio2013representation}. This paper studies an orthogonal source of transfer failure. The joint law \(P(X,Y)\) is fixed, and the loss changes. A representation trained for a coarse objective, such as classification accuracy, may be reused for a finer objective, such as probabilistic prediction under log loss. Under the fixed distribution, the target loss may require distinctions beyond those required by the source loss. This setting is called \emph{loss shift}.

The key point is decision-theoretic. A loss determines which aspect of \(P(Y\mid X)\) is Bayes-relevant \citep{wald1950statistical,berger1985statistical,devroye1996probabilistic}. Zero-one loss depends on the Bayes class; log loss requires the full predictive law \citep{gneiting2007strictly,gneiting2011making}. Thus, in binary classification, an accuracy-minimal representation may identify points with \(\eta(x)=P(Y=1\mid X=x)=0.51\) and \(\eta(x)=0.99\), since both have the same Bayes class. Under log loss, these points require different forecasts. Once a frozen representation has collapsed this distinction, the distinction is unavailable to downstream heads trained on that representation.

This is formalized using \emph{Bayes quotients} induced by the loss. For fixed \(P\) and loss \(\ell\), the Bayes quotient identifies inputs that are equivalent for Bayes-optimal prediction under \(\ell\). A representation is Bayes-sufficient for \((P,\ell)\) exactly when it preserves this quotient. This construction allows losses to be compared by refinement. The notation \(\ell_1\preceq_P\ell_2\) means that every distinction required by \(\ell_1\) is also required by \(\ell_2\), and \(\ell_1\prec_P\ell_2\) denotes strict refinement. The quotient formulation makes the qualitative obstruction transparent. If \(\ell_1\prec_P\ell_2\), then a representation that is Bayes-minimal for \(\ell_1\) is insufficient for \(\ell_2\). The main quantitative result specializes this obstruction to finite-output log loss, where the frozen-transfer gap is exactly
\[
R^\star_{\log,P}(h(X))-R^\star_{\log,P}(X)
=
I(Y;X\mid h(X)),
\]
equivalently
\[
\mathbb E\!\left[
D_{\mathrm{KL}}\bigl(P(Y\mid X)\,\|\,P(Y\mid h(X))\bigr)
\right].
\]
Thus the transfer penalty is precisely the conditional information about \(Y\) lost by the frozen representation \citep{cover2006elements}.

The contributions are as follows. \textbf{(i)} Loss shift is identified as a transfer failure mode distinct from distribution shift. \textbf{(ii)} Building on the Bayes-sufficient quotient framework of \citet{sevetlidis2026bayessufficient}, loss-induced Bayes quotients are used as the representation-theoretic substrate for comparing source and target losses. \textbf{(iii)} Losses are ordered by quotient refinement, and strict refinement is shown to give a qualitative obstruction for source-minimal frozen representations. \textbf{(iv)} For finite-output log loss, the exact frozen-transfer gap is identified as conditional mutual information, with accuracy-to-probabilistic-prediction transfer as the motivating example.

\section{Background and Related Work}
\label{sec:background}

This work connects transfer learning, decision theory, property elicitation, representation sufficiency, and calibration. Its central premise is that the information a representation must preserve is determined by the pair \((P,\ell)\).

Transfer learning is commonly formulated as reuse across source and target domains or tasks \citep{pan2010survey,zhuang2021comprehensive}. In domain adaptation, target risk is controlled by source risk, a source--target discrepancy, and the error of an ideal joint hypothesis \citep{ben2007analysis,ben2010theory}. Related work on dataset shift studies covariate shift, sample-selection bias, label shift, and concept shift as mismatches between training and deployment distributions \citep{quionero2009dataset,sugiyama2012machine}. Deep transfer and domain-adversarial methods similarly seek representations that align domains while preserving predictive information \citep{ganin2016domain}. Later work shows that invariance alone can discard class-relevant structure or fail to preserve the conditionals needed for prediction \citep{zhao2019learning,stojanov2021domain,zhao2022fundamental}.

The setting here is orthogonal. The joint law \(P(X,Y)\) is fixed and the loss changes. Transfer failure is driven by a mismatch between the information required by the source and target objectives.

Decision theory views prediction as choosing actions under a loss \citep{wald1950statistical,berger1985statistical}. Given \(P(Y\mid X=x)\), the Bayes action minimizes conditional risk. Thus a loss specifies which property of the conditional law is relevant, e.g., the Bayes class for zero-one loss, the conditional mean for squared loss, a median or quantile for absolute and quantile losses, and the full conditional distribution for log loss. Property elicitation formalizes this perspective by studying which statistical properties are elicited by which losses \citep{savage1971elicitation,lambert2008eliciting,gneiting2011making, frongillo2015elicitation} and are exactly the candidates for the Bayes quotients used below. Proper scoring rules are the canonical losses for probabilistic prediction; strictly proper rules are uniquely minimized by the true predictive distribution \citep{gneiting2007strictly,dawid2007geometry, dawid2014theory}.

This viewpoint is used at the representation level. For fixed \(P\), a loss induces a Bayes-relevant partition, or quotient, of the input space. Different losses may induce different quotients, and a target loss is more demanding when it separates inputs that the source loss identifies.

The present paper builds directly on the Bayes-sufficient representation framework of \citet{sevetlidis2026bayessufficient}. That framework proves that, when the Bayes action is almost surely unique, Bayes sufficiency is equivalent to measurability of the Bayes action with respect to the representation, or equivalently to containment of the induced Bayes quotient. Here it is used as the representation-theoretic means for comparing losses. Different losses may induce different Bayes quotients under the same joint law \(P\), and strict refinement of these quotients yields an obstruction to frozen transfer.

The question of what a representation must preserve appears in sufficient dimension reduction, information bottleneck, and representation learning. Sufficient dimension reduction seeks \(Z=h(X)\) such that \(Y\perp X\mid Z\), thereby preserving the full conditional law \citep{li1991sliced,cook1998regression,chiaromonte2002sufficient, cook2009sufficient}. This is appropriate for log loss and can be stronger than necessary for coarser losses. Information bottleneck and modern representation learning emphasize compression while retaining task-relevant information \citep{tishby1999information,bengio2013representation, achille2018emergence}. The point here is that task relevance is loss-dependent. Information irrelevant for one loss can be essential for another.

This makes minimality central. A representation that is Bayes-minimal for a source loss preserves exactly the distinctions needed for that loss. Such a representation is optimal for the source objective and may be too compressed for a finer target objective. This perspective is related to Blackwell's comparison of experiments, which orders information structures by usefulness across decision problems \citep{blackwell1953equivalent,torgersen1991comparison}. Here the information source and joint law are fixed, and losses are ordered by the Bayes-relevant information they require from representations of \(X\).

Classification-to-probability-estimation is the clearest example of loss shift. Accuracy asks for a top-label decision; log-loss probability prediction asks for the relevant conditional probabilities. In binary classification, zero-one loss depends on the sign of \(\eta(X)-1/2\), and log loss depends on the value of \(\eta(X)=P(Y=1\mid X)\). An accuracy-minimal representation may therefore collapse inputs on the same side of the decision boundary even when they require different probability forecasts.

This distinction is related to work on class-probability estimation, surrogate and composite losses, and calibration \citep{niculescu2005predicting,buja2005loss,bartlett2006convexity, reid2010composite,guo2017calibration}. Modern neural networks may achieve high accuracy together with poor probability estimates, motivating post-hoc methods such as temperature scaling \citep{guo2017calibration}. From this perspective, calibration can also be an information problem. If the frozen representation has discarded the conditional probabilities evaluated by the target loss, those probabilities are unavailable to the downstream head.

\section{Theoretical Framework}
\label{sec:framework}

This section formalizes frozen-representation transfer under loss shift. The joint law \(P(X,Y)\) is fixed throughout, and the loss changes. The central object is the Bayes-relevant information induced by a loss under this fixed law.

The formalism is decision-theoretic. A loss determines a Bayes decision problem, and hence determines which features of the conditional law of \(Y\) given \(X\) are relevant for optimal prediction \citep{wald1950statistical,berger1985statistical,devroye1996probabilistic}. This viewpoint is also closely related to the theory of elicitable functionals. Different losses are consistent for different properties of the conditional distribution, such as the mean, a quantile, a class decision, or the full predictive law \citep{savage1971elicitation,gneiting2011making,gneiting2007strictly, lambert2008eliciting,frongillo2015vector}.

\subsection{Measurable setup and Bayes sufficiency}
\label{subsec:measurable-setup}

Let \((\mathcal X,\mathcal A_{\mathcal X})\) and \((\mathcal Y,\mathcal A_{\mathcal Y})\) be standard Borel spaces, and let
\begin{equation}
P\in\mathcal P(\mathcal X\times\mathcal Y),
\qquad
(X,Y)\sim P.
\end{equation}
The marginal law of \(X\) is denoted by \(P_X\). The standard Borel assumption is used to avoid measurability pathologies and to ensure the existence of regular conditional laws in the usual supervised settings \citep{kallenberg2002foundations,bogachev2007measure}.

Let \((\mathcal A,\mathcal A_{\mathcal A})\) be a measurable action space and let
\begin{equation}
\ell:\mathcal A\times\mathcal Y\to[0,\infty]
\end{equation}
be a measurable loss. A predictor is a measurable map \(f:\mathcal X\to\mathcal A\), with risk
\begin{equation}
R_{\ell,P}(f)
:=
\mathbb E_P[\ell(f(X),Y)].
\end{equation}
The unrestricted Bayes risk is
\begin{equation}
R^\star_{\ell,P}
:=
\inf_{f:\mathcal X\to\mathcal A}R_{\ell,P}(f),
\end{equation}
where the infimum is over all measurable predictors.

A representation is a measurable map \(h:\mathcal X\to\mathcal H\), with \(H:=h(X)\). The optimal risk achievable through \(H\) is
\begin{equation}
R^\star_{\ell,P}(H)
:=
\inf_{c:\mathcal H\to\mathcal A}
\mathbb E_P[\ell(c(H),Y)],
\end{equation}
where the infimum is over measurable heads \(c\). The representation-level excess risk is
\begin{equation}
\mathcal E_{\ell,P}(H)
:=
R^\star_{\ell,P}(H)-R^\star_{\ell,P},
\end{equation}
whenever the subtraction is well-defined. Since every predictor of the form \(c(H)\) is also a measurable function of \(X\),
\begin{equation}
R^\star_{\ell,P}(H)\geq R^\star_{\ell,P}.
\end{equation}

\begin{definition}[Bayes sufficiency]
A representation \(H=h(X)\) is Bayes-sufficient for \((P,\ell)\) if
\begin{equation}
R^\star_{\ell,P}(H)=R^\star_{\ell,P}.
\end{equation}
Equivalently, using \(H\) preserves the optimal achievable risk available from the full input \(X\).
\end{definition}

When Bayes actions are attained, this agrees with the usual action-based statement that some measurable head on \(H\) implements a Bayes-optimal action rule. The risk formulation is used because it is the object needed for the transfer comparisons below.

This notion is loss-dependent. For log loss, Bayes sufficiency coincides with preserving the conditional law \(P(Y\mid X)\). This connects it to sufficient dimension reduction, where one seeks reductions of \(X\) that preserve \(Y\mid X\), or sometimes only a lower-dimensional functional such as \(\mathbb E[Y\mid X]\) \citep{li1991sliced,cook1998regression,cook2009sufficient}. For other losses, Bayes sufficiency may require less information than the full conditional law.

All inclusions and equalities of sigma-algebras are understood modulo null sets. Since all representations are functions of \(X\), an inclusion such as
\begin{equation}
\sigma(q(X))\subseteq\sigma(h(X))
\end{equation}
may equivalently be read as the pullback inclusion
\begin{equation}
q^{-1}(\mathcal A_{\mathcal Q})
\subseteq
h^{-1}(\mathcal A_{\mathcal H})
\qquad
\text{mod }P_X.
\end{equation}

\subsection{Bayes-quotient setting}
\label{subsec:bayes-quotient-setting}

For a regular conditional law \(P(Y\in\cdot\mid X=x)\), define the conditional risk
\begin{equation}
L_{\ell,P}(a\mid x)
:=
\mathbb E[\ell(a,Y)\mid X=x].
\end{equation}
The Bayes action correspondence is
\begin{equation}
\Gamma_{\ell,P}(x)
:=
\arg\min_{a\in\mathcal A} L_{\ell,P}(a\mid x),
\end{equation}
whenever the minimum is attained. This is the usual conditional form of the Bayes decision problem \citep{wald1950statistical,berger1985statistical}.

\paragraph{Bayes-quotient regime.}
The Bayes-quotient framework of \citep{sevetlidis2026bayessufficient} is used. For a fixed joint law \(P\) and loss \(\ell\), a representation \(H=h(X)\) is Bayes-sufficient if it contains enough information to implement a Bayes-optimal decision rule. In the almost-surely unique Bayes-action case, let \(a^\star_{\ell,P}(X)\) denote the unique Bayes action. The Bayes information is then
\begin{equation}
I_{\ell,P}:=\sigma(a^\star_{\ell,P}(X)),
\end{equation}
and a Bayes quotient is any random variable
\begin{equation}
Q_{\ell,P}=q_{\ell,P}(X)
\end{equation}
such that
\begin{equation}
\sigma(Q_{\ell,P})=I_{\ell,P}
\qquad
\mathrm{mod}\;P_X .
\end{equation}
The quotient characterization of Bayes-sufficient representations gives
\begin{equation}
\begin{aligned}
&H \text{ is Bayes-sufficient for } (P,\ell)\\
&\quad\Longleftrightarrow\quad
\sigma(q_{\ell,P}(X))\subseteq\sigma(H)
\qquad \mathrm{mod}\;P_X .
\end{aligned}
\end{equation}
Thus the Bayes quotient is the loss-dependent representation target obtained from the Bayes-sufficiency factorization theorem. The present paper takes this quotient-representable setting and studies what happens when two losses induce different quotients under the same joint law \(P\).

The standard losses considered below lie in this regime under the stated regularity conditions. For example, zero-one classification without ties has a unique Bayes class, squared loss has the conditional mean as Bayes action, unique quantile losses have the corresponding conditional quantile as Bayes action, and finite-label log loss has the conditional class-probability vector as Bayes action. More generally, the same notation applies whenever the Bayes-relevant information for \((P,\ell)\) is represented by a measurable quotient \(q_{\ell,P}(X)\) satisfying the display above.

\begin{definition}[Bayes minimality]
In the Bayes-quotient regime, a representation \(H=h(X)\) is Bayes-minimal for \((P,\ell)\) if
\begin{equation}
\sigma(H)=\sigma(q_{\ell,P}(X))
\qquad
\mathrm{mod}\;P_X .
\end{equation}
\end{definition}

Thus Bayes sufficiency means that \(H\) contains the Bayes quotient, and Bayes minimality means that \(H\) contains exactly the Bayes quotient.

\subsection{Quotients for standard losses}
\label{subsec:standard-loss-quotients}

The Bayes quotient is concrete in the standard losses used in supervised learning. When the Bayes action is almost surely unique, the quotient is generated by the Bayes action \(a^\star_{\ell,P}(X)\). For zero-one classification without ties, this is the Bayes class. In binary classification with \(\eta(X)=P(Y=1\mid X)\), the quotient is generated by
\begin{equation}
\mathbf 1\{\eta(X)>1/2\}.
\end{equation}
For squared loss with integrable real-valued \(Y\), the quotient is generated by the conditional mean \(\mathbb E[Y\mid X]\). For losses eliciting a unique quantile or median, the quotient is generated by that conditional functional.

Finite-label log loss is finer. If \(\mathcal Y=\{1,\dots,K\}\), then log loss is minimized by the full conditional probability vector
\begin{equation}
\pi_X=(P(Y=1\mid X),\dots,P(Y=K\mid X)).
\end{equation}
Thus the log-loss Bayes quotient is generated by \(\pi_X\), as in classical characterizations of proper scoring rules and conditional risk minimization \citep{gneiting2007strictly,steinwart2008support,shalev2014understanding}.

These examples make the refinement relation below explicit. Accuracy can identify inputs with the same Bayes class, and log loss separates inputs whenever their conditional probability vectors differ.

\subsection{Loss preorder}
\label{subsec:loss-preorder}

Let \(\ell_1:\mathcal A_1\times\mathcal Y\to[0,\infty]\) and \(\ell_2:\mathcal A_2\times\mathcal Y\to[0,\infty]\) be two losses that admit Bayes quotients under \(P\), with quotient maps \(q_{\ell_1,P}\) and \(q_{\ell_2,P}\).

\begin{definition}[Loss preorder]
The loss \(\ell_2\) is said to be at least as representationally demanding as \(\ell_1\) under \(P\), written
\begin{equation}
\ell_1\preceq_P\ell_2,
\end{equation}
if
\begin{equation}
\sigma(q_{\ell_1,P}(X))
\subseteq
\sigma(q_{\ell_2,P}(X))
\qquad
\text{mod }P_X.
\end{equation}
Equivalently, every representation sufficient for \(\ell_2\) is sufficient for \(\ell_1\).
\end{definition}

\begin{definition}[Strict loss refinement]
The notation \(\ell_1\prec_P\ell_2\) means that \(\ell_1\preceq_P\ell_2\) and
\begin{equation}
\sigma(q_{\ell_2,P}(X))
\not\subseteq
\sigma(q_{\ell_1,P}(X))
\qquad
\text{mod }P_X.
\end{equation}
\end{definition}

\begin{definition}[Loss equivalence]
The notation \(\ell_1\sim_P\ell_2\) means that
\begin{equation}
\sigma(q_{\ell_1,P}(X))
=
\sigma(q_{\ell_2,P}(X))
\qquad
\text{mod }P_X.
\end{equation}
Thus \(\ell_1\) and \(\ell_2\) elicit the same Bayes-relevant information under \(P\).
\end{definition}

The relation \(\preceq_P\) orders losses by the sigma-algebras of Bayes-relevant information they induce. It is therefore close in spirit to comparing elicited properties of a distribution \citep{lambert2008eliciting,frongillo2015vector}, and retains the decision-theoretic flavor of Blackwell comparison \citep{blackwell1953equivalent,torgersen1991comparison}. In the present setting, the information source \(X\) is fixed and the ordering is over losses.

\begin{proposition}[The loss relation is a preorder]
\label{prop:loss-preorder}
The relation \(\preceq_P\) is reflexive and transitive. Moreover, it induces a partial order on equivalence classes modulo \(\sim_P\).
\end{proposition}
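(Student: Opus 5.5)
The plan is to transport the claim from losses to sub-sigma-algebras of \(\sigma(X)\), where inclusion modulo \(P_X\)-null sets is already known to be a preorder. For each loss \(\ell\) admitting a Bayes quotient under \(P\), set \(\mathcal I_\ell := \sigma(q_{\ell,P}(X))\). By definition, \(\ell_1\preceq_P\ell_2\) holds exactly when \(\mathcal I_{\ell_1}\subseteq\mathcal I_{\ell_2}\) mod \(P_X\).

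First I would check that this is well defined, i.e.\ that it does not depend on which quotient map is chosen. Any two Bayes quotients for \((P,\ell)\) both generate \(I_{\ell,P}\) mod \(P_X\), so they generate the same sigma-algebra mod null sets. I would make "mod \(P_X\)" concrete by passing to completions: \(\mathcal F\subseteq\mathcal G\) mod \(P_X\) means every \(A\in\mathcal F\) differs from some \(B\in\mathcal G\) by a \(P_X\)-null set. In the pullback language of Section~\ref{subsec:measurable-setup}, the null sets are those of \(P\) restricted to \(\sigma(X)\).

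With this in place, the two preorder properties are short.
\begin{itemize}
\item \emph{Reflexivity:} take \(B=A\).
\item \emph{Transitivity:} suppose \(\mathcal I_{\ell_1}\subseteq\mathcal I_{\ell_2}\) and \(\mathcal I_{\ell_2}\subseteq\mathcal I_{\ell_3}\) mod \(P_X\). Given \(A\in\mathcal I_{\ell_1}\), choose \(B\in\mathcal I_{\ell_2}\) with \(P_X(A\triangle B)=0\), then \(C\in\mathcal I_{\ell_3}\) with \(P_X(B\triangle C)=0\). Since \(A\triangle C\subseteq (A\triangle B)\cup(B\triangle C)\), we get \(P_X(A\triangle C)=0\).
\end{itemize}
The only point needing care is that the null sets live on a common space, namely \(\mathcal X\) with \(P_X\). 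This holds because every quotient is a function of the same \(X\).

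For the partial order, I would first note that \(\sim_P\) equals \(\preceq_P\cap\succeq_P\). Equality of sigma-algebras mod \(P_X\) is, by definition, mutual inclusion mod \(P_X\). It follows that \(\sim_P\) is an equivalence relation: reflexivity and transitivity come from the above, and symmetry is built in. I would then define \([\ell_1]\le[\ell_2]\) iff \(\ell_1\preceq_P\ell_2\). This is independent of representatives. If \(\ell_1\sim_P\ell_1'\) and \(\ell_2\sim_P\ell_2'\), then \(\mathcal I_{\ell_1'}\subseteq\mathcal I_{\ell_1}\subseteq\mathcal I_{\ell_2}\subseteq\mathcal I_{\ell_2'}\) mod \(P_X\), and transitivity gives the result. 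The relation \(\le\) inherits reflexivity and transitivity. Antisymmetry holds by construction, since \([\ell_1]\le[\ell_2]\) and \([\ell_2]\le[\ell_1]\) mean \(\ell_1\sim_P\ell_2\).

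I do not expect a genuine obstacle; this is the standard fact that every preorder induces a partial order on its symmetric classes. The step that needs the most care is the bookkeeping for "mod \(P_X\)", namely that null-set modification composes correctly. That is exactly the transitivity computation above. Finally, I would remark that the "Equivalently" clause in the definition of \(\preceq_P\) is not needed here. It follows from the quotient characterization of Bayes sufficiency, but the proof uses only the sigma-algebra definition.
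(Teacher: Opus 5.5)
Your proposal is correct and follows the same route as the paper: reflexivity and transitivity come from inclusion of the sigma-algebras $\sigma(q_{\ell,P}(X))$, and antisymmetry on $\sim_P$-classes holds by construction. You add three checks the paper leaves implicit, since it simply appeals to transitivity of set inclusion: the symmetric-difference argument for transitivity modulo $P_X$-null sets, independence from the choice of quotient map, and independence from the choice of class representatives.
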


\begin{proof}[Proof sketch]
Reflexivity follows because every sigma-algebra contains itself. Transitivity follows from transitivity of sigma-algebra inclusion. After quotienting by \(\sim_P\), antisymmetry holds by construction. The full derivation is given in Appendix~\ref{app:proof-loss-preorder}.
\end{proof}

\subsection{Frozen transfer and qualitative obstruction}
\label{subsec:frozen-transfer}
\label{subsec:loss-shift-insufficiency}

Let \(\ell_{\mathrm{pre}}\) be a pretraining loss and \(\ell_{\mathrm{tar}}\) a target loss, both admitting Bayes quotients under \(P\). The joint law \(P(X,Y)\) is the same during pretraining and downstream evaluation; only the loss changes.

A source representation is Bayes-minimal when it contains exactly the source-loss quotient,
\begin{equation}
\sigma(H)=\sigma(q_{\ell_{\mathrm{pre}},P}(X))
\qquad
\mathrm{mod}\;P_X .
\end{equation}
This is a representation-level selection condition, i.e., source risk fixes the Bayes predictor, not every extra coordinate a representation might retain. The transfer results below therefore concern frozen representations that are source-minimal, or deliberately compressed toward the source quotient, as can arise through architecture, bottlenecking, regularization, optimization bias, or explicit minimality constraints. This distinction between predictor-level and representation-level properties is formalized by the fiber criterion of \citet{sevetlidis2026fiber}.

Pretraining is modeled as producing a frozen representation
\begin{equation}
H_{\mathrm{pre}}=h_{\mathrm{pre}}(X).
\end{equation}
The downstream learner may choose a measurable head \(c:\mathcal H_{\mathrm{pre}}\to\mathcal A_{\mathrm{tar}}\) to minimize
\begin{equation}
\mathbb E_P\!\left[\ell_{\mathrm{tar}}(c(H_{\mathrm{pre}}),Y)\right].
\end{equation}
The optimal frozen-transfer risk is
\begin{equation}
R^\star_{\ell_{\mathrm{tar}},P}(H_{\mathrm{pre}})
=
\inf_{c:\mathcal H_{\mathrm{pre}}\to\mathcal A_{\mathrm{tar}}}
\mathbb E_P[\ell_{\mathrm{tar}}(c(H_{\mathrm{pre}}),Y)],
\end{equation}
and the frozen-transfer excess risk is
\begin{equation}
\mathcal E_{\ell_{\mathrm{tar}},P}(H_{\mathrm{pre}})
=
R^\star_{\ell_{\mathrm{tar}},P}(H_{\mathrm{pre}})
-
R^\star_{\ell_{\mathrm{tar}},P}.
\end{equation}
The goal is to identify when this quantity is positive under a fixed distribution.

This gives a fixed-distribution counterpart to the usual transfer-learning setup, which primarily studies changes between source and target distributions or domains \citep{pan2010survey,zhuang2021comprehensive}. Here the predictive criterion changes under the same joint law.

Once losses are represented by their Bayes quotients, strict refinement has an immediate transfer consequence. A representation that is minimal for a coarser source quotient is insufficient for a strictly finer target quotient.

\begin{proposition}[Qualitative loss-shift obstruction]
\label{prop:loss-shift-obstruction}
Let \(\ell_1\) and \(\ell_2\) be losses that admit Bayes quotients under \(P\). Assume
\begin{equation}
\ell_1\prec_P\ell_2.
\end{equation}
If \(H_1^\star=h_1^\star(X)\) is Bayes-minimal for \((P,\ell_1)\), then \(H_1^\star\) is not Bayes-sufficient for \((P,\ell_2)\).
\end{proposition}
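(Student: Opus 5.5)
The plan is a contradiction argument that uses only the quotient characterization of Bayes sufficiency quoted in Section~\ref{subsec:bayes-quotient-setting}. Suppose, contrary to the claim, that \(H_1^\star\) were Bayes-sufficient for \((P,\ell_2)\). Since \(\ell_2\) admits a Bayes quotient under \(P\), the characterization gives
\[
\sigma(q_{\ell_2,P}(X))\subseteq\sigma(H_1^\star)\qquad\mathrm{mod}\;P_X .
\]

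Next, Bayes minimality of \(H_1^\star\) for \((P,\ell_1)\) says
\[
\sigma(H_1^\star)=\sigma(q_{\ell_1,P}(X))\qquad\mathrm{mod}\;P_X .
\]
Chaining the two relations yields
\[
\sigma(q_{\ell_2,P}(X))\subseteq\sigma(q_{\ell_1,P}(X))\qquad\mathrm{mod}\;P_X .
\]
This is exactly the non-inclusion that the definition of strict refinement \(\ell_1\prec_P\ell_2\) excludes, so we have a contradiction and the proposition follows. We never use the other half of strict refinement, \(\ell_1\preceq_P\ell_2\); it matters only for interpretation, since it shows \(H_1^\star\) remains sufficient for \(\ell_1\).

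The only point needing care is the chaining step. Inclusion modulo \(P_X\)-null sets must be transitive and compatible with equality modulo null sets. To check this, read every relation as a pullback inclusion of sub-\(\sigma\)-algebras of \(\mathcal A_{\mathcal X}\) completed by \(P_X\)-null sets; equivalently, every set in the smaller algebra agrees up to a null set with some set in the larger one. Transitivity then holds because a union of two null sets is null. This is the same fact underlying the transitivity in Proposition~\ref{prop:loss-preorder}, so I expect no real obstacle.

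I would also state the proof in a second, equivalent form, to make clear that the obstruction is quantitative in the risk formulation. Failure of the quotient inclusion means \(R^\star_{\ell_2,P}(H_1^\star)>R^\star_{\ell_2,P}\), i.e.\ \(\mathcal E_{\ell_2,P}(H_1^\star)>0\) whenever this excess risk is finite. That is the direction the later log-loss identity will make precise.
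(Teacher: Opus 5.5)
Your proposal is correct and is essentially the paper's proof in contradiction form. The paper argues directly: it transports the non-inclusion \(\sigma(q_{\ell_2,P}(X))\not\subseteq\sigma(q_{\ell_1,P}(X))\) across the minimality equality \(\sigma(H_1^\star)=\sigma(q_{\ell_1,P}(X))\) modulo \(P_X\), then invokes the quotient characterization. That is the contrapositive of your chaining step. Your side remarks also apply to the paper's version: only the non-inclusion half of \(\ell_1\prec_P\ell_2\) is used, and inclusion modulo \(P_X\)-null sets is transitive.
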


\begin{proof}[Proof sketch]
Bayes minimality gives
\begin{equation}
\sigma(H_1^\star)=\sigma(q_{\ell_1,P}(X))
\qquad
\mathrm{mod}\;P_X .
\end{equation}
Strict refinement gives
\begin{equation}
\sigma(q_{\ell_2,P}(X))
\not\subseteq
\sigma(q_{\ell_1,P}(X))
\qquad
\mathrm{mod}\;P_X .
\end{equation}
Therefore \(\sigma(q_{\ell_2,P}(X))\not\subseteq\sigma(H_1^\star)\). By the Bayes-quotient characterization recalled above, \(H_1^\star\) is not Bayes-sufficient for \((P,\ell_2)\). The full derivation is given in Appendix~\ref{app:proof-loss-shift-obstruction}.
\end{proof}

Consequently, when the target Bayes risk is finite,
\begin{equation}
\mathcal E_{\ell_2,P}(H_1^\star)
=
R^\star_{\ell_2,P}(H_1^\star)
-
R^\star_{\ell_2,P}
\in (0,\infty],
\end{equation}
with finite positive excess whenever \(R^\star_{\ell_2,P}(H_1^\star)<\infty\). Equivalently, optimizing over all measurable downstream heads cannot recover the unrestricted target Bayes risk.

Because the obstruction is stated for all measurable heads, it is stronger than a linear-probe obstruction. Linear probes are commonly used to evaluate frozen representations in representation learning \citep{alain2017understanding,he2020momentum,chen2020simple}. If \(\mathcal C_{\mathrm{lin}}\) is any class of linear or affine heads satisfying
\begin{equation}
\mathcal C_{\mathrm{lin}}
\subseteq
\{c:\mathcal H_1\to\mathcal A_2 \mid c \text{ is measurable}\},
\end{equation}
then
\begin{equation}
\inf_{c\in\mathcal C_{\mathrm{lin}}}
\mathbb E_P[\ell_2(c(H_1^\star),Y)]
\geq
\inf_{c:\mathcal H_1\to\mathcal A_2}
\mathbb E_P[\ell_2(c(H_1^\star),Y)].
\end{equation}
Therefore any measurable-head lower bound
\begin{equation}
\inf_{c:\mathcal H_1\to\mathcal A_2}
\mathbb E_P[\ell_2(c(H_1^\star),Y)]
-
R^\star_{\ell_2,P}
\geq
\Delta
\end{equation}
implies the corresponding linear-probe lower bound
\begin{equation}
\inf_{c\in\mathcal C_{\mathrm{lin}}}
\mathbb E_P[\ell_2(c(H_1^\star),Y)]
-
R^\star_{\ell_2,P}
\geq
\Delta.
\end{equation}
Thus the loss-shift obstruction holds at the level of all measurable heads and automatically transfers to linear-probe evaluation.

\subsection{Exact log-loss transfer gap}
\label{subsec:exact-log-loss-gap}

The qualitative proposition identifies when frozen transfer must fail. For finite-label log loss, the failure has an exact quantitative form. The excess target risk is precisely the conditional information about \(Y\) lost by the representation.

Let
\begin{equation}
\mathcal Y=\{1,\dots,K\},
\end{equation}
let \(\Delta(\mathcal Y)\) be the probability simplex over \(\mathcal Y\), and define
\begin{equation}
\ell_{\log}(p,y):=-\log p(y),
\end{equation}
with \(-\log 0=+\infty\).

Let
\begin{equation}
\pi_X(y):=P(Y=y\mid X),
\qquad
\pi_H(y):=P(Y=y\mid H).
\end{equation}
Since \(H=h(X)\),
\begin{equation}
\pi_H(y)=\mathbb E[\pi_X(y)\mid H],
\qquad
y\in\mathcal Y.
\end{equation}
Thus \(D_{\mathrm{KL}}(\pi_X\|\pi_H)\) is well-defined almost surely. The convention is
\begin{equation}
D_{\mathrm{KL}}(p\|q)
:=
\sum_{y\in\mathcal Y}
p(y)\log\frac{p(y)}{q(y)},
\end{equation}
with \(0\log(0/q)=0\) and \(p\log(p/0)=+\infty\) for \(p>0\). The notation \(\Ent(\cdot\mid\cdot)\) denotes conditional Shannon entropy.

\begin{theorem}[Exact log-loss transfer gap]
\label{thm:exact-log-loss-gap}
For any measurable representation \(H=h(X)\),
\begin{equation}
\mathcal E_{\log,P}(H)
=
R^\star_{\log,P}(H)-R^\star_{\log,P}
=
\mathbb E\left[
D_{\mathrm{KL}}\bigl(\pi_X\|\pi_H\bigr)
\right].
\end{equation}
Equivalently,
\begin{equation}
\mathcal E_{\log,P}(H)
=
\Ent(Y\mid H)-\Ent(Y\mid X)
=
I(Y;X\mid H).
\end{equation}
\end{theorem}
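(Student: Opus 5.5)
The plan is to compute both Bayes risks explicitly and then subtract. The key tool is the conditional Gibbs inequality: for any probability vectors \(p,q\in\Delta(\mathcal Y)\),
\[
-\sum_y p(y)\log q(y)=\Ent(p)+D_{\mathrm{KL}}(p\|q)\ge \Ent(p),
\]
with equality iff \(q=p\).

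\textbf{Step 1: the representation-level risk.} Fix a measurable head \(c:\mathcal H\to\Delta(\mathcal Y)\). By the tower property, conditioning on \(H\),
\[
\mathbb E[-\log c(H)(Y)]=\mathbb E\Bigl[-\textstyle\sum_y \pi_H(y)\log c(H)(y)\Bigr]
=\mathbb E[\Ent(\pi_H)]+\mathbb E[D_{\mathrm{KL}}(\pi_H\|c(H))].
\]
This is valid because the integrand is nonnegative, so Tonelli handles the values \(+\infty\). The infimum over \(c\) is therefore attained at \(c(H)=\pi_H\), which is measurable in \(H\) via a regular conditional law on the standard Borel space. Hence \(R^\star_{\log,P}(H)=\mathbb E[\Ent(\pi_H)]=\Ent(Y\mid H)\). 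The special case \(h=\mathrm{id}\) gives \(R^\star_{\log,P}=\Ent(Y\mid X)\). Both quantities are finite, bounded by \(\log K\), so the subtraction defining \(\mathcal E_{\log,P}(H)\) is well-defined. This already yields the second display \(\mathcal E=\Ent(Y\mid H)-\Ent(Y\mid X)\), which equals \(I(Y;X\mid H)\) by the definition of conditional mutual information together with \(\Ent(Y\mid X,H)=\Ent(Y\mid X)\), since \(H\) is \(\sigma(X)\)-measurable.

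\textbf{Step 2: the KL form.} Evaluate the risk of the head \(\pi_H\) by conditioning on \(X\) instead of \(H\). Since \(\pi_H\) is a function of \(X\),
\[
\mathbb E[-\log\pi_H(Y)]=\mathbb E\Bigl[-\textstyle\sum_y\pi_X(y)\log\pi_H(y)\Bigr]=\mathbb E[\Ent(\pi_X)]+\mathbb E[D_{\mathrm{KL}}(\pi_X\|\pi_H)].
\]
Subtracting \(\mathbb E[\Ent(\pi_X)]=R^\star_{\log,P}\) gives the first identity.

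\textbf{Main obstacle: null and infinite terms.} The delicate point is the \(+\infty\) convention. We need \(\pi_H(y)=0\) to imply \(\pi_X(y)=0\) almost surely; otherwise \(D_{\mathrm{KL}}(\pi_X\|\pi_H)\) could be infinite on a positive-probability set. I would prove this directly from \(\pi_H(y)=\mathbb E[\pi_X(y)\mid H]\). On the event \(\{\pi_H(y)=0\}\in\sigma(H)\), the integral of the nonnegative variable \(\pi_X(y)\) equals the integral of \(\pi_H(y)\), which is \(0\). Hence \(\pi_X(y)=0\) a.s. there.

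With this in hand, every term \(p\log(p/0)\) is of the form \(0\cdot\infty\) and is set to \(0\). The decompositions in Steps 1 and 2 are then sums of nonnegative, a.s. finite terms, and linearity of expectation is justified. The case \(\pi_X(y)=0\), for which the convention \(0\log(0/q)=0\) applies, needs a similar but routine check.
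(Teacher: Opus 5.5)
Your proposal is correct and follows essentially the same route as the paper's proof: a conditional cross-entropy decomposition given \(H\) yields \(R^\star_{\log,P}(H)=\Ent(Y\mid H)\), the case \(H=X\) yields \(R^\star_{\log,P}\), conditioning on \(X\) converts the entropy difference into \(\mathbb E[D_{\mathrm{KL}}(\pi_X\|\pi_H)]\), and \(\Ent(Y\mid X,H)=\Ent(Y\mid X)\) gives the mutual-information form. Your extra care with the \(+\infty\) conventions and with \(\{\pi_H(y)=0\}\subseteq\{\pi_X(y)=0\}\) a.s.\ fills in details the paper leaves implicit. It is harmless but not strictly needed, since finiteness of \(\Ent(Y\mid H)\le\log K\) already forces the expected KL to be finite.
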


\begin{proof}[Proof sketch]
For fixed \(H=u\), the log-risk of a head \(c(u)\) is the cross-entropy between \(\pi_H(u)\) and \(c(u)\). This is minimized at \(c(u)=\pi_H(u)\), giving \(R^\star_{\log,P}(H)=\Ent(Y\mid H)\). Taking \(H=X\) gives \(R^\star_{\log,P}=\Ent(Y\mid X)\). The entropy difference equals the expected conditional KL divergence, and because \(H\) is a function of \(X\), it also equals \(I(Y;X\mid H)\). The full derivation is given in Appendix~\ref{app:proof-exact-log-loss-gap}.
\end{proof}

This identity is the standard log-loss regret decomposition into conditional entropy, conditional KL divergence, and conditional mutual information \citep{cover2006elements}. Related excess-log-risk and conditional-information representations also appear in Bayesian learning theory \citep{xu2022minimum}.

Combining Theorem~\ref{thm:exact-log-loss-gap} with Proposition~\ref{prop:loss-shift-obstruction}, suppose that \(\ell_1\) is a pretraining loss admitting a Bayes quotient under \(P\) and that
\begin{equation}
\ell_1\prec_P\ell_{\log}.
\end{equation}
If \(H_1^\star=h_1^\star(X)\) is Bayes-minimal for \((P,\ell_1)\), then
\begin{equation}
\mathcal E_{\log,P}(H_1^\star)
=
\mathbb E\left[
D_{\mathrm{KL}}
\bigl(
P(Y\mid X)
\,\|\,
P(Y\mid H_1^\star)
\bigr)
\right]
>0.
\end{equation}

\subsection{Binary classification-to-log-loss transfer}
\label{subsec:binary-accuracy-calibration}

The log-loss result is now instantiated in binary classification. Let \(\mathcal Y=\{0,1\}\) and
\begin{equation}
\eta(X):=P(Y=1\mid X).
\end{equation}
The zero-one loss is
\begin{equation}
\ell_{0/1}(a,y):=\mathbf 1\{a\neq y\},
\qquad
a\in\{0,1\}.
\end{equation}
Assume there are no ties.
\begin{equation}
P_X(\eta(X)=1/2)=0.
\end{equation}
Then the unique zero-one Bayes action is
\begin{equation}
a^\star_{0/1,P}(X)
=
\mathbf 1\{\eta(X)>1/2\},
\end{equation}
and a zero-one Bayes-minimal representation is
\begin{equation}
H^\star_{0/1}
:=
\mathbf 1\{\eta(X)>1/2\}.
\end{equation}

Binary log loss elicits the full conditional probability \(\eta(X)\), extending beyond its thresholded decision. This reflects the classical distinction between classification and class-probability estimation. Zero-one loss requires the Bayes class decision, and proper losses for probability estimation require the conditional class probability \citep{buja2005loss,reid2010composite,bartlett2006convexity, niculescu2005predicting,guo2017calibration,resin2023classification}.

\begin{proposition}[Binary classification-to-log-loss gap]
\label{prop:binary-accuracy-calibration-gap}
Assume \(\eta(X)\) is not measurable with respect to \(\sigma(H^\star_{0/1})\). Then \(H^\star_{0/1}\) is not Bayes-sufficient for binary log loss, and
\begin{equation}
\mathcal E_{\log,P}(H^\star_{0/1})
=
\mathbb E\left[
\mathrm{kl}
\left(
\eta(X)
\,\middle\|\,
\mathbb E[\eta(X)\mid H^\star_{0/1}]
\right)
\right]
>0,
\end{equation}
where
\begin{equation}
\mathrm{kl}(u\|v)
:=
u\log\frac{u}{v}
+
(1-u)\log\frac{1-u}{1-v}.
\end{equation}
\end{proposition}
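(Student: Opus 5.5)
The argument will specialize Theorem~\ref{thm:exact-log-loss-gap} to \(K=2\) and then prove positivity directly from the non-measurability hypothesis. This avoids relying only on the abstract refinement machinery.

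\textbf{Step 1 (identity).} Take \(H=H^\star_{0/1}\) in Theorem~\ref{thm:exact-log-loss-gap}. With \(\mathcal Y=\{0,1\}\), we have \(\pi_X=(1-\eta(X),\eta(X))\). Since \(\pi_H(y)=\mathbb E[\pi_X(y)\mid H]\), it follows that \(\pi_H=(1-\bar\eta,\bar\eta)\) with \(\bar\eta:=\mathbb E[\eta(X)\mid H^\star_{0/1}]\). The KL divergence between two Bernoulli laws is exactly \(\mathrm{kl}(\eta(X)\|\bar\eta)\), which gives the displayed equality. Since \(H^\star_{0/1}\) takes only two values, \(\bar\eta\) is explicit: it equals \(\mathbb E[\eta(X)\mid \eta(X)>1/2]\) or \(\mathbb E[\eta(X)\mid\eta(X)<1/2]\) on the respective cells. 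Here the no-tie assumption ensures that these two cells partition \(\mathcal X\) modulo \(P_X\). A cell of probability zero is irrelevant.

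\textbf{Step 2 (positivity).} The quantity \(\mathrm{kl}(u\|v)\ge 0\), with equality iff \(u=v\); this is Gibbs' inequality, where the value \(+\infty\) counts as positive. Hence the expectation vanishes iff \(\eta(X)=\bar\eta\) almost surely. That would make \(\eta(X)\) equal almost surely to a \(\sigma(H^\star_{0/1})\)-measurable variable, i.e.\ measurable modulo \(P_X\), contradicting the hypothesis. Therefore the gap is strictly positive, possibly \(+\infty\). The \(+\infty\) case cannot actually arise on a nonnull set: \(\bar\eta\in(0,1)\) unless the cell's \(\eta\) is a.s.\ \(0\) or \(1\), and in that case \(\eta=\bar\eta\) on that cell. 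So the gap is finite whenever \(\mathsf H(Y\mid X)\) is finite, though this is not needed for the claim.

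\textbf{Step 3 (insufficiency).} Bayes insufficiency follows immediately: Bayes sufficiency means \(\mathcal E_{\log,P}(H^\star_{0/1})=0\), which Step~2 excludes. The conclusion also fits Proposition~\ref{prop:loss-shift-obstruction}. The log-loss quotient is generated by \(\eta(X)\), and the zero-one quotient by \(\mathbf 1\{\eta>1/2\}\), a function of \(\eta\). Hence \(\ell_{0/1}\preceq_P\ell_{\log}\), and the hypothesis gives strictness. The direct KL argument, however, is self-contained.

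The main obstacle is the measure-theoretic care in Step~2. One must state that ``\(\mathrm{kl}=0\) a.s.'' forces \(\eta(X)=\bar\eta\) a.s., and that equality with a \(\sigma(H)\)-measurable version means measurability modulo null sets, matching the paper's convention. One must also handle the \(+\infty\) and boundary conventions \(0\log 0\) consistently with those fixed before Theorem~\ref{thm:exact-log-loss-gap}.
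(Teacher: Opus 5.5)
Your proposal is correct and follows essentially the paper's argument. Both specialize Theorem~\ref{thm:exact-log-loss-gap} to \(K=2\), identify \(P(Y=1\mid H^\star_{0/1})=\mathbb E[\eta(X)\mid H^\star_{0/1}]\) by the tower property, and obtain positivity from \(\mathrm{kl}(u\|v)=0\iff u=v\) together with non-measurability modulo \(P_X\). The only difference is that the paper derives insufficiency first from the log-loss quotient characterization, whereas you derive it from the strictly positive gap via the risk definition of Bayes sufficiency, which is equally valid since all binary log risks are at most \(\log 2\).
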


\begin{proof}[Proof sketch]
Binary log loss is sufficient exactly when the representation determines \(\eta(X)\). Since \(H^\star_{0/1}\) only records the Bayes class decision, it is insufficient whenever \(\eta(X)\) is not \(\sigma(H^\star_{0/1})\)-measurable. Applying Theorem~\ref{thm:exact-log-loss-gap} and using the tower property gives the displayed binary KL identity. Positivity follows from strict positivity of binary KL whenever its arguments differ. The full derivation is given in Appendix~\ref{app:proof-binary-accuracy-calibration-gap}.
\end{proof}

\section{Experiments}

\label{sec:experiments}

The theory predicts an obstruction to frozen transfer under identical source and target distributions. The experiments isolate the effect of changing the loss while keeping \(P(X,Y)\) fixed. They focus on the accuracy-to-log-loss setting, where the source objective requires the Bayes class and the target objective requires calibrated probabilities.

Four experimental settings are considered. The first is a fully controlled discrete model in which the Bayes quotient for classification is known exactly and the log-loss transfer gap can be computed in closed form. This experiment directly tests the conditional-KL identity in Theorem~\ref{thm:exact-log-loss-gap}. The second studies learned bottlenecked representations on the same distribution, showing that a classification objective together with bottleneck noise can induce the same loss-shift failure mode in a learned representation. The third uses an image-based construction in which the Bayes class is a coarse visual attribute, while the probability level depends on finer visual information. The fourth uses CIFAR-10H human label distributions, giving a real-image instance in which majority-label classification and soft-label prediction are evaluated on the same images.

In each experiment, source training and downstream evaluation use samples from the same joint law \(P(X,Y)\); the training/evaluation loss and the frozen representation constraint change. For each frozen representation, a downstream probabilistic head is trained on a training sample and evaluated on an independent test sample. The reported metrics are classification accuracy, negative log likelihood (NLL), Brier score, and expected calibration error (ECE). Since NLL is the proper scoring rule corresponding to the target loss, it is the primary metric. Brier score provides a secondary proper-loss evaluation. ECE is reported as a diagnostic. A coarsened predictor may be calibrated relative to its own representation while remaining strictly suboptimal under log loss.

\subsection{Controlled binary model with exact loss-shift gap}
\label{subsec:exp-controlled}

The theory is first tested in a controlled setting where the source and target distributions are identical, the Bayes quotient for classification is known exactly, and the log-loss transfer gap can be computed in closed form. This design isolates loss shift with domain, covariate, label, and concept held fixed.

Let
\[
X=(S,T),
\qquad
S\in\{-1,+1\},
\qquad
T\in\{1,\dots,m\}.
\]
The variables \(S\) and \(T\) are sampled independently and uniformly, and then
\[
Y\mid S,T \sim \operatorname{Bernoulli}(\eta(S,T)).
\]
The conditional probability is chosen so that
\[
\eta(+1,T)>1/2,
\qquad
\eta(-1,T)<1/2
\]
for every \(T\), while still varying with \(T\) inside each Bayes-class region. The main experiment uses \(m=5\) and
\[
\eta(+1,T)\in\{0.55,0.65,0.75,0.85,0.95\},
\]
with the symmetric values
\[
\eta(-1,T)\in\{0.45,0.35,0.25,0.15,0.05\}.
\]
Thus the Bayes classifier depends on \(S\), and the Bayes-optimal log-loss predictor depends on the full pair \((S,T)\).

The classification-minimal representation is
\[
H_{\mathrm{cls}}=S.
\]
It preserves exactly the Bayes class and is therefore sufficient for zero-one loss. It leaves \(\eta(S,T)\) undetermined and is therefore insufficient for log loss. The optimal log-loss head on \(H_{\mathrm{cls}}\) predicts the coarsened conditional probability
\[
P(Y=1\mid H_{\mathrm{cls}})
=
\mathbb E[\eta(S,T)\mid S].
\]
Consequently, the exact excess log risk of \(H_{\mathrm{cls}}\) relative to the full representation \(X\) is
\[
\mathbb E\left[
\mathrm{kl}\left(
\eta(S,T)
\,\middle\|\,
\mathbb E[\eta(S,T)\mid S]
\right)
\right],
\]
which is the binary specialization of Theorem~\ref{thm:exact-log-loss-gap}. For the main \(m=5\) setting, this population conditional-KL gap is \(0.058506\) nats for \(H_{\mathrm{cls}}=S\).

Five representations are compared. The full representation \(X=(S,T)\) preserves both the Bayes class and the probability level. The representation \(S\) preserves the Bayes class. A partially informative representation preserves \(S\) and a coarsening of \(T\), thereby retaining part of the within-class probability information. A hashed representation applies a lossy compression to \(X\). It is a fixed deterministic random hash from the ten cells \((S,T)\) into four codes, sampled once with seed \(17\), held fixed across replications, and allowed to merge different values of \(S\). Finally, a constant representation discards all input information. For each representation, the optimal downstream probabilistic head is estimated from \(20{,}000\) training samples and evaluated on \(50{,}000\) independent test samples. The experiment is repeated over \(300\) independent replications, and means with \(95\%\) confidence-interval half-widths are reported.

\begin{table*}[t]
\centering
\caption{
Controlled binary model. The full representation and the Bayes-class representation \(S\) achieve the same classification accuracy. The Bayes-class representation incurs substantially larger log loss and Brier score. The \(\pm\) values are \(95\%\) confidence-interval half-widths computed over \(300\) independent replications.
}
\label{tab:controlled-main}
\begin{tabular}{lcccc}
\toprule
Representation & Accuracy & NLL & Brier & ECE \\
\midrule
Full \(X=(S,T)\)
& \(0.7500 \pm 0.0003\)
& \(0.5040 \pm 0.0003\)
& \(0.1676 \pm 0.0001\)
& \(0.0082 \pm 0.0003\) \\
Bayes class \(S\)
& \(0.7500 \pm 0.0003\)
& \(0.5624 \pm 0.0003\)
& \(0.1875 \pm 0.0001\)
& \(0.0038 \pm 0.0003\) \\
Coarsened \(T\) within \(S\)
& \(0.7500 \pm 0.0003\)
& \(0.5229 \pm 0.0003\)
& \(0.1725 \pm 0.0002\)
& \(0.0054 \pm 0.0003\) \\
Hashed compression
& \(0.5401 \pm 0.0003\)
& \(0.6872 \pm 0.0001\)
& \(0.2470 \pm 0.0001\)
& \(0.0056 \pm 0.0003\) \\
Constant
& \(0.5001 \pm 0.0003\)
& \(0.6932 \pm 0.0000\)
& \(0.2500 \pm 0.0000\)
& \(0.0033 \pm 0.0003\) \\
\bottomrule
\end{tabular}
\end{table*}

Table~\ref{tab:controlled-main} shows the main result. The full representation and the Bayes-class representation \(S\) have indistinguishable classification accuracy,
\[
\operatorname{Acc}(X)=\operatorname{Acc}(S)=0.7500.
\]
Thus, from the perspective of the source objective, \(S\) is already optimal. Under the target log loss, the two representations are sharply separated. The full representation achieves NLL \(0.5040\), and the Bayes-class representation achieves NLL \(0.5624\). The resulting excess test log loss is
\[
0.0584 \pm 0.0002
\qquad
(95\%\text{ CI half-width}).
\]
This matches the exact population conditional-KL gap \(0.058506\) nats. The Brier score exhibits the same ordering. The representation \(S\) therefore preserves all information needed for Bayes-optimal classification and discards information needed for optimal probabilistic prediction.

\begin{figure}[t]
\centering
\includegraphics[width=.48\textwidth]{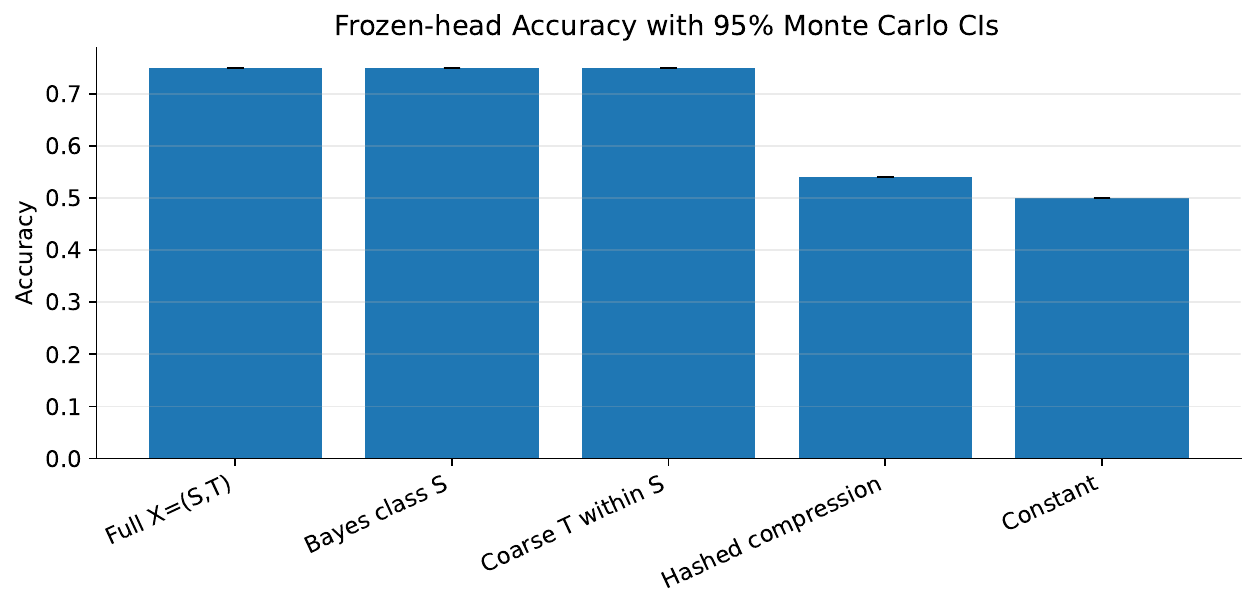}
\caption{
Classification accuracy in the controlled binary model. The full representation \(X=(S,T)\), the Bayes-class representation \(S\), and the partially coarsened representation achieve the same Bayes-optimal accuracy. Accuracy therefore treats these different amounts of within-class probability information as equivalent.
}
\label{fig:controlled-accuracy}
\end{figure}

\begin{figure}[t]
\centering
\includegraphics[width=.48\textwidth]{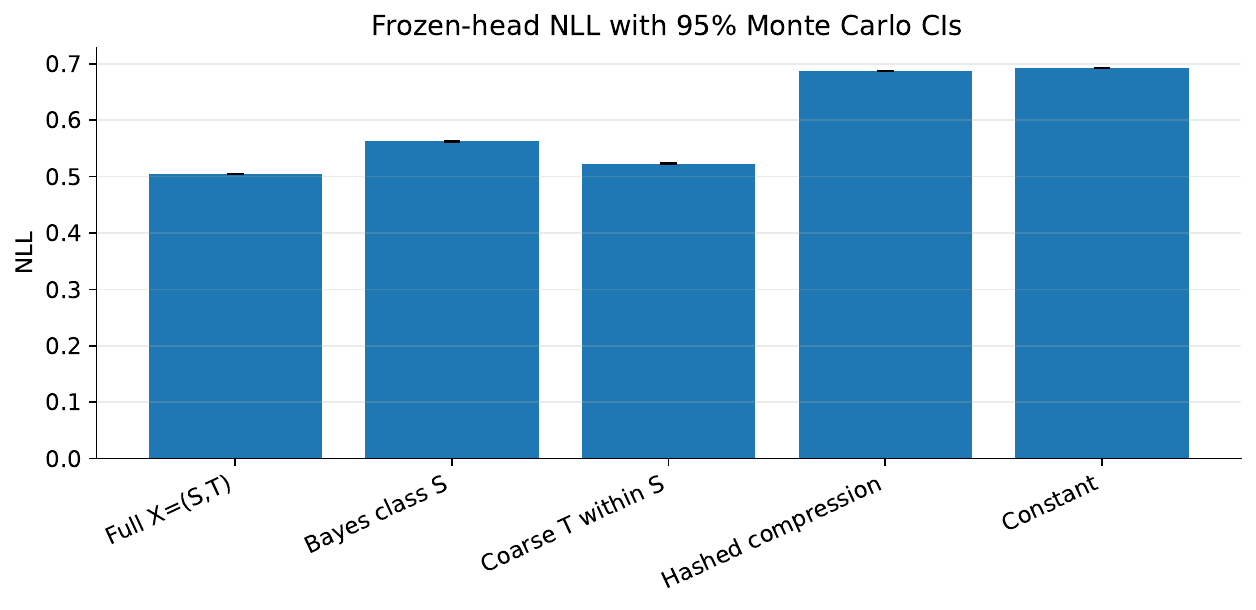}
\caption{
Negative log likelihood in the controlled binary model. The representation \(S\) is classification-optimal and strictly suboptimal for log loss because it collapses inputs with different conditional probabilities. The partially coarsened representation lies between the full and Bayes-class representations, consistent with retaining part of the log-loss-relevant information.
}
\label{fig:controlled-nll}
\end{figure}

Figures~\ref{fig:controlled-accuracy} and~\ref{fig:controlled-nll} visualize the loss-shift effect. Moving from \(X\) to \(S\) preserves accuracy and substantially increases NLL. This is the empirical signature predicted by Proposition~\ref{prop:loss-shift-obstruction}, i.e., a representation can be minimal and sufficient for the source loss while being insufficient for the target loss.

The intermediate representation provides a useful refinement check. It retains some information about \(T\) within each value of \(S\), and its NLL lies between the full representation and the Bayes-class representation. This monotone ordering supports the interpretation of the gap as an information-loss effect.

\begin{table}[t]
\centering
\caption{
Excess NLL relative to the full representation. The Bayes-class representation has a strictly positive log-loss gap despite having the same classification accuracy as the full representation. Entries are means \(\pm\) \(95\%\) confidence-interval half-widths.
}
\label{tab:controlled-gaps}
\begin{tabular}{lc}
\toprule
Representation & Excess NLL relative to \(X\) \\
\midrule
Bayes class \(S\)
& \(0.058389 \pm 0.000161\) \\
Coarsened \(T\) within \(S\)
& \(0.018914 \pm 0.000098\) \\
Hashed compression
& \(0.183139 \pm 0.000260\) \\
Constant
& \(0.189134 \pm 0.000262\) \\
\bottomrule
\end{tabular}
\end{table}

Table~\ref{tab:controlled-gaps} reports the NLL gaps relative to the full representation. The Bayes-class representation incurs a gap of \(0.058389\) nats, with a confidence interval far from zero. The partially coarsened representation has a smaller positive gap, and the hashed and constant representations are substantially worse. The ordering agrees with the amount of probability-relevant information preserved by each representation.

The amount of within-class probability variation is next varied while keeping the Bayes decision boundary fixed. Specifically, a one-parameter family indexed by \(\alpha\) is used, where increasing \(\alpha\) increases the variation of \(\eta(S,T)\) within each Bayes-class region. Throughout the sweep, the Bayes classifier remains \(S\). The growing log-loss transfer gap reflects the increasing importance of probability information discarded by \(S\).

\begin{figure}[t]
\centering
\includegraphics[width=.48\textwidth]{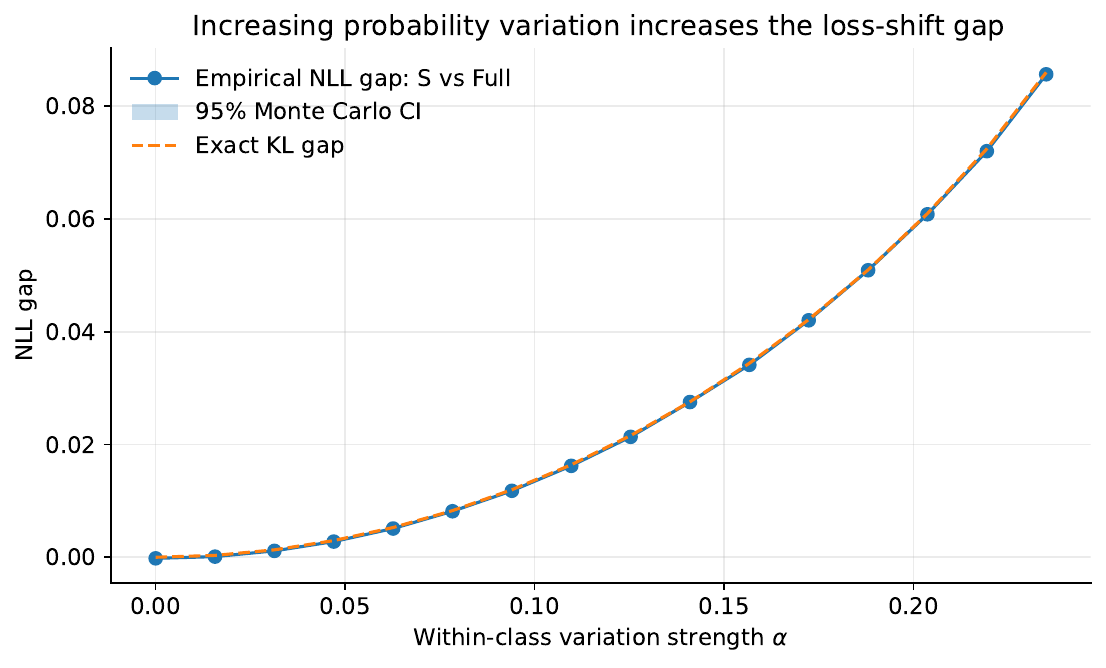}
\caption{
Excess log loss of the Bayes-class representation \(S\) as within-class probability variation increases. The Bayes classifier is unchanged throughout the sweep, and the log-loss penalty grows because \(S\) discards increasingly important probability information.
}
\label{fig:alpha-sweep-nll}
\end{figure}

\begin{figure}[t]
\centering
\includegraphics[width=.48\textwidth]{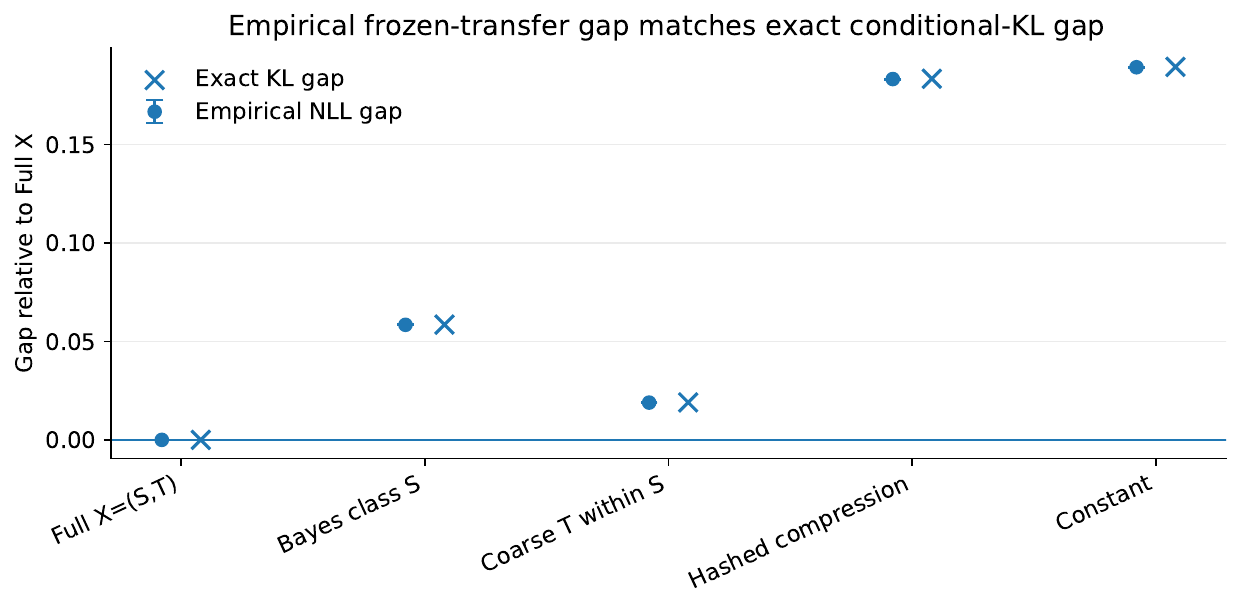}
\caption{
Empirical NLL gap versus the exact conditional-KL gap predicted by Theorem~\ref{thm:exact-log-loss-gap}. The empirical gap closely tracks the theoretical quantity, confirming that the observed transfer penalty is the conditional information about \(Y\) lost by the frozen representation.
}
\label{fig:empirical-vs-exact}
\end{figure}

Figures~\ref{fig:alpha-sweep-nll} and~\ref{fig:empirical-vs-exact} compare the empirical NLL gap with the exact conditional-KL gap. The agreement is nearly exact. Across the sweep, the correlation between the empirical and theoretical gaps is \(0.999995\), the mean absolute error is \(1.92\times 10^{-4}\), and the maximum absolute error is \(4.19\times 10^{-4}\). At the largest value of \(\alpha\), the exact gap is \(0.0860\) nats and the empirical gap is \(0.0857 \pm 0.0003\) using the confidence-interval half-width.

These results directly confirm the exact log-loss identity
\[
R^\star_{\log,P}(H)-R^\star_{\log,P}(X)
=
\mathbb E\left[
D_{\mathrm{KL}}\bigl(P(Y\mid X)\,\|\,P(Y\mid H)\bigr)
\right].
\]
They also illustrate why the obstruction is invisible to accuracy. Increasing within-class probability variation leaves the Bayes classifier unchanged and increases the information required by log loss.

Finally, NLL and the conditional-KL gap are the primary measures of loss shift in this experiment. The Bayes-class representation can be calibrated relative to its coarsened information, because its optimal forecast is \(\mathbb E[\eta(X)\mid S]\). It is strictly less informative and strictly worse under proper scoring rules. The central empirical effect is a loss of probability-relevant information, measured exactly by NLL and the conditional-KL gap.

\subsection{Learned bottleneck representations}
\label{subsec:exp-learned-bottleneck}

The previous experiment constructs the Bayes-class representation explicitly. The next question is whether a learned representation can exhibit the same failure mode. The data-generating distribution is the same throughout source training, downstream training, and evaluation. The design removes distribution shift, so persistent downstream log-loss degradation relative to the full-input and fine-tuned controls is attributed to the frozen representation.

The same binary construction as in Section~\ref{subsec:exp-controlled} is used, with the representation learned from samples. Let
\[
X=(S,T),
\qquad
S\in\{-1,+1\},
\qquad
T\in\{1,\dots,m\},
\]
with \(m=10\). For a parameter \(\alpha\in[0,0.23]\), set
\[
\eta(+1,T)=0.75+\alpha r_T,
\qquad
\eta(-1,T)=0.25-\alpha r_T,
\]
where \(r_T\) ranges uniformly from \(-1\) to \(1\). For every value of \(\alpha\), the Bayes classifier is unchanged and depends on \(S\). As \(\alpha\) increases, the probability level \(\eta(S,T)\) varies more within each Bayes-class region, so log loss increasingly rewards information about \(T\).

Inputs are encoded as \((S,e_T,S e_T)\in\mathbb R^{21}\), where \(e_T\) is the one-hot encoding of \(T\). The encoder \(h_\theta\) is an MLP \(21\to64\to64\to1\), with layer normalization and \(\tanh\) activations after the two hidden layers. Source training uses a linear head on the bottleneck, minimizes the margin-one hinge loss \(\max\{0,1-(2Y-1)g(h_\theta(X))\}\), and injects Gaussian noise with standard deviation \(0.15\) at the bottleneck during training (variance \(0.0225\)). Together, the one-dimensional bottleneck and source-time noise create a controlled compressed-classification regime in which the theory predicts loss-shift effects.

For each \((\alpha,\text{seed})\), source training uses \(12{,}000\) samples, the downstream log-loss head uses an independent \(12{,}000\) samples, validation uses \(4{,}000\) samples, and testing uses \(50{,}000\) samples. All models are trained with batch size \(512\), AdamW, weight decay \(10^{-4}\), and gradient clipping at norm \(5\). The source encoder and frozen downstream head use learning rate \(2\times10^{-3}\); fine-tuning uses learning rate \(10^{-3}\). Training runs for at most \(350\) epochs, with validation-based early stopping using patience \(40\) and improvement threshold \(10^{-5}\); the best validation checkpoint is restored. After source training, the encoder is frozen and a nonlinear probabilistic head \(1\to64\to64\to1\) with \(\tanh\) activations is trained with binary log loss. This frozen transfer procedure is compared with two controls. The first is a full-input log-loss learner trained directly on \(X\) with a \(21\to128\to128\to1\) nonlinear head. The second fine-tunes the pretrained encoder and target head jointly under log loss, using zero representation noise. The fine-tuning control tests whether the probability information is present in \(X\) and unavailable through the frozen representation.

For reference, two oracle predictors are also reported. The oracle full predictor uses the true conditional probability \(\eta(X)\), and therefore gives the population target optimum. The oracle Bayes-class predictor uses only \(H=S\), and therefore gives the ideal classification-minimal representation. Its excess log loss is exactly
\[
\mathbb E\left[
\mathrm{kl}
\left(
\eta(S,T)
\,\middle\|\,
\mathbb E[\eta(S,T)\mid S]
\right)
\right].
\]
The experiment is repeated over \(30\) independent seeds, and means with \(95\%\) confidence-interval half-widths are reported.

\begin{table*}[t]
\centering
\caption{
Learned bottleneck experiment at the largest within-class probability variation, \(\alpha=0.23\). The learned hinge bottleneck preserves nearly Bayes-optimal classification accuracy and its frozen log-loss head has higher NLL than the full-input learner and the fine-tuned model. The \(\pm\) values are \(95\%\) confidence-interval half-widths computed over \(30\) independent seeds.
}
\label{tab:learned-bottleneck-main}
\begin{tabular}{lcccc}
\toprule
Method & Accuracy & NLL & Brier & ECE \\
\midrule
Oracle \(\eta(X)\)
& \(0.750 \pm 0.001\)
& \(0.499 \pm 0.001\)
& \(0.166 \pm 0.001\)
& \(0.005 \pm 0.001\) \\
Oracle Bayes class \(S\)
& \(0.750 \pm 0.001\)
& \(0.562 \pm 0.001\)
& \(0.187 \pm 0.001\)
& \(0.002 \pm 0.001\) \\
Full-input log-loss learner
& \(0.749 \pm 0.001\)
& \(0.499 \pm 0.002\)
& \(0.166 \pm 0.001\)
& \(0.012 \pm 0.002\) \\
Frozen hinge bottleneck
& \(0.749 \pm 0.002\)
& \(0.516 \pm 0.005\)
& \(0.171 \pm 0.002\)
& \(0.033 \pm 0.008\) \\
Fine-tuned hinge bottleneck
& \(0.749 \pm 0.001\)
& \(0.500 \pm 0.001\)
& \(0.167 \pm 0.001\)
& \(0.013 \pm 0.002\) \\
\bottomrule
\end{tabular}
\end{table*}

Table~\ref{tab:learned-bottleneck-main} shows the learned-representation result at \(\alpha=0.23\), where the log-loss-relevant variation inside each Bayes-class region is largest. All nontrivial methods have essentially the same classification accuracy, approximately \(0.75\), which is the Bayes accuracy of the construction. Accuracy therefore treats the full input, the frozen learned bottleneck, and the fine-tuned model as equivalent.

The target log loss separates them. The full-input learner achieves NLL \(0.499\), matching the oracle full predictor. The fine-tuned model also achieves NLL \(0.500\), showing that the probability information required by log loss is present in \(X\) and can be recovered when the encoder is allowed to change. The frozen hinge bottleneck achieves NLL \(0.516\), corresponding to an additional penalty of approximately \(0.017\) nats relative to the full-input learner. Thus the learned bottleneck retains enough information for the source classification decision and remains incomplete for optimal probabilistic prediction after freezing.

The oracle Bayes-class predictor \(S\) gives the limiting case of complete classification-minimal compression. At \(\alpha=0.23\), its NLL is \(0.562\), with the same accuracy as the full predictor. The learned bottleneck lies between the full-input predictor and this oracle class quotient. This is the expected ordering. The learned representation can retain more than the exact minimal quotient, and the compression induced by the source classification objective is sufficient to create a measurable frozen-transfer penalty under log loss.

\begin{figure}[t]
\centering
\includegraphics[width=.48\textwidth]{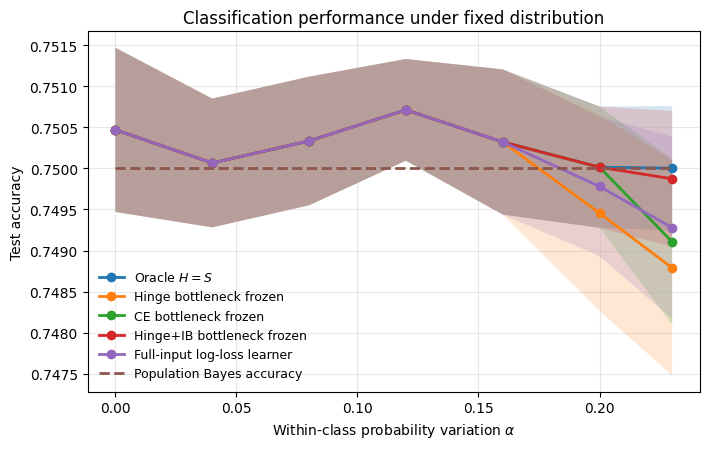}
\caption{
Classification accuracy in the learned bottleneck experiment. The Bayes classifier is unchanged throughout the sweep over \(\alpha\), and the learned frozen bottleneck maintains essentially the same accuracy as the full-input and fine-tuned models. The downstream log-loss obstruction is therefore hidden from the source classification objective.
}
\label{fig:learned-accuracy-alpha}
\end{figure}

\begin{figure}[t]
\centering
\includegraphics[width=.48\textwidth]{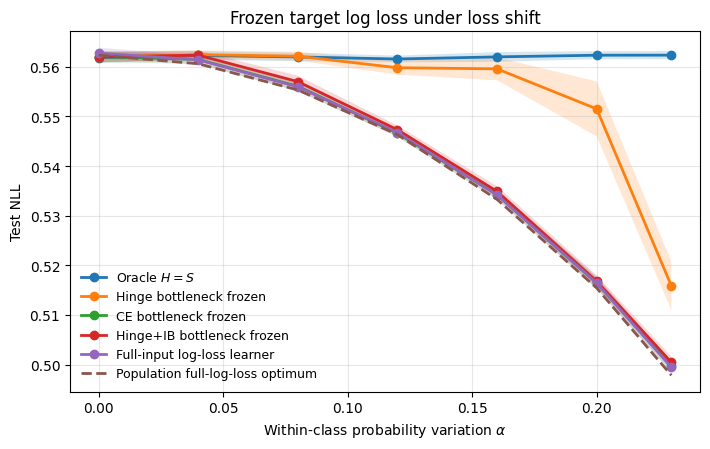}
\caption{
Negative log likelihood in the learned bottleneck experiment. The full-input log-loss learner and the fine-tuned model remain close to the oracle target risk. The frozen hinge bottleneck incurs a larger NLL as within-class probability variation becomes relevant to the target loss.
}
\label{fig:learned-nll-alpha}
\end{figure}

Figures~\ref{fig:learned-accuracy-alpha} and~\ref{fig:learned-nll-alpha} show the effect across the full \(\alpha\)-sweep. Increasing \(\alpha\) preserves the Bayes decision rule, so the classification problem remains the same from the perspective of the source loss. The downstream NLL of the frozen bottleneck separates from the full-input and fine-tuned controls. This is the learned analogue of the controlled result in Section~\ref{subsec:exp-controlled}. The obstruction is invisible to accuracy and visible to the target proper loss.

\begin{figure}[t]
\centering
\includegraphics[width=.48\textwidth]{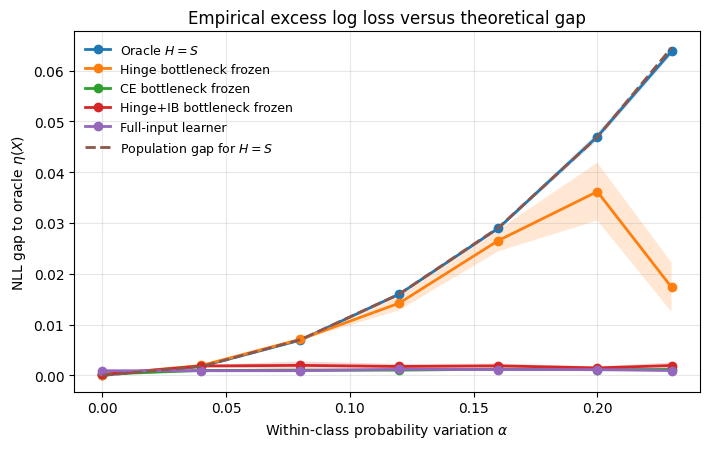}
\caption{
Excess NLL relative to the oracle predictor \(\eta(X)\). The oracle Bayes-class representation \(S\) follows the exact conditional-KL gap. The learned frozen bottleneck has a smaller positive gap, indicating that it preserves some probability-relevant information and remains insufficient for optimal log-loss transfer.
}
\label{fig:learned-nll-gap-alpha}
\end{figure}

Figure~\ref{fig:learned-nll-gap-alpha} compares the learned frozen-transfer gap with the exact oracle gap for \(H=S\). The oracle class representation tracks the conditional-KL expression predicted by Theorem~\ref{thm:exact-log-loss-gap}. The learned bottleneck has a smaller gap, because a one-dimensional real-valued representation can still retain some within-class information. Its gap is clearly separated from the full-input and fine-tuned controls at large \(\alpha\). This supports the central claim. Once a frozen representation has discarded probability-relevant variation, the target log-loss optimum is unavailable through downstream heads trained on that representation.

\begin{figure}[t]
\centering
\includegraphics[width=.48\textwidth]{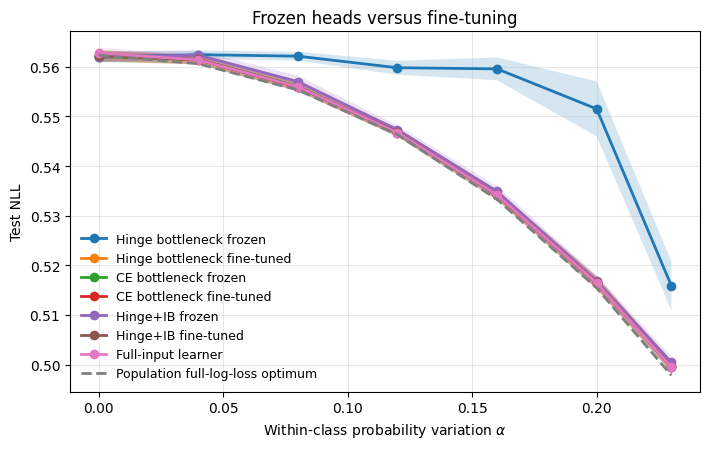}
\caption{
Frozen transfer versus fine-tuning. Fine-tuning the pretrained bottleneck under the target log loss recovers the full-input performance. Training a downstream head on the frozen bottleneck leaves a positive NLL gap. This shows that the target information is present in the input distribution and unavailable through the frozen representation.
}
\label{fig:learned-frozen-vs-finetuned}
\end{figure}

Figure~\ref{fig:learned-frozen-vs-finetuned} isolates the role of freezing. The fine-tuned model nearly matches the full-input log-loss learner, and the frozen model retains a positive gap. The degradation is explained by the information available through the frozen representation.

These results complement the exact construction in Section~\ref{subsec:exp-controlled}. The controlled experiment shows that a Bayes-minimal classification representation has a log-loss gap equal to the conditional information it discards. The learned experiment shows that the same mechanism can arise from optimization with a compressed classification representation. The learned bottleneck retains more information than the oracle quotient \(S\), giving a smaller gap than the exact Bayes-class gap, and it exhibits the same qualitative signature of unchanged classification accuracy, worse frozen log loss, and recovery when the representation is allowed to adapt to the target loss.

\subsection{dSprites image-based loss-shift experiment}
\label{subsec:exp-image-loss-shift}

An image-based experiment on dSprites tests the same accuracy-to-log-loss mechanism in an image setting where the relevant Bayes quotients are controlled through known latent factors. Five scale levels \(T\in\{0,\ldots,4\}\) are used, and a binary factor \(S\in\{-1,+1\}\) is defined by thresholding the horizontal position at the midpoint. The stochastic target is generated by
\[
Y\mid X \sim \operatorname{Bernoulli}(\eta(S,T)).
\]
At the main value \(\alpha=0.20\), the construction uses
\[
\eta(+1,T)=0.75+\alpha r_T,
\qquad
\eta(-1,T)=0.25-\alpha r_T,
\]
where \(r_T\) ranges uniformly from \(-1\) to \(1\) over the five selected scale levels. Thus the factor \(S\) determines the Bayes class, and \(T\) changes the conditional probability while preserving the sign of \(\eta(S,T)-1/2\). Consequently,
\[
Q_{0/1}(X)=S,
\qquad
Q_{\log}(X)=(S,T).
\]
As in the controlled model, classification requires \(S\), and log loss rewards preserving the probability-relevant factor \(T\).

The following five source-training conditions are compared.
\[
\begin{gathered}
\text{hinge } d=2,\quad
\text{hinge } d=8,\quad
\text{hinge } d=32,\\
\text{log-loss } d=8,\quad
\text{log-loss } d=32.
\end{gathered}
\]
Both hinge and log-loss encoders are trained on samples \((X,Y)\) from the same stochastic binary target; the source objectives receive \(X\) and \(Y\), with \(S\), \(T\), and \(\eta(S,T)\) used only for analysis. The hinge conditions include low-dimensional noisy bottlenecks and a higher-dimensional bottleneck, while the log-loss conditions serve as proper-loss source-training controls at matched dimensions. After source training, the encoder is frozen and a new binary log-loss head is trained on top of the frozen representation. Test accuracy, negative log likelihood (NLL), Brier score, expected calibration error (ECE), and diagnostic probe accuracies are reported for the latent factors \(S\), \(T\), and shape. Results are averaged over three seeds and reported as means \(\pm 1.96\) standard errors. Exact noise settings, additional implementation details, oracle metrics across the full \(\alpha\)-sweep, and calibration diagnostics are reported in Appendix~\ref{app:image-loss-shift}.

\paragraph{Oracle comparison.}
The oracle comparison isolates the loss-shift mechanism. The full posterior oracle uses \(\eta(S,T)\), and the Bayes-class oracle uses \(S\). For each fixed value of \(\alpha\), these oracles have the same accuracy because they induce the same Bayes decisions. The exact KL gap between the \(S\)-only predictor and the full posterior grows with \(\alpha\), showing that \(T\) becomes increasingly important for probability prediction while remaining irrelevant for classification. At \(\alpha=0.20\), the exact conditional-KL gap is \(0.059\) nats, and the empirical finite-test NLL difference between the two oracles is \(0.063\) nats. This trend is shown in Figure~\ref{fig:dsprites-main-results}, left. The complete oracle metrics across the \(\alpha\)-sweep are reported in Appendix~\ref{app:image-loss-shift}.

\paragraph{Learned frozen representations.}
Figure~\ref{fig:dsprites-main-results} summarizes the main dSprites results, and Table~\ref{tab:dsprites-learned-main} reports the learned frozen-encoder metrics at \(\alpha=0.20\). The same qualitative pattern appears in the learned models. All representations retain the classification factor \(S\) almost perfectly, with \(S\)-probe accuracies between \(0.982\) and \(0.990\), and therefore obtain similar target accuracies. The source objective affects how much probability-relevant information remains decodable from the frozen representation. The diagnostic probes recover \(T\) more accurately from log-loss encoders than from the corresponding hinge encoders. For example, the \(d=8\) log-loss encoder obtains \(T\)-probe accuracy \(0.485\), compared with \(0.428\) for the \(d=8\) hinge encoder. This difference is reflected in proper-loss performance. The \(d=8\) log-loss encoder achieves NLL \(0.524\) and Brier score \(0.174\), improving over both the \(d=8\) hinge encoder and the \(S\)-only Bayes-class oracle reported in Appendix~\ref{app:image-loss-shift}.

\begin{figure*}[t]
\centering
\includegraphics[width=.31\textwidth]{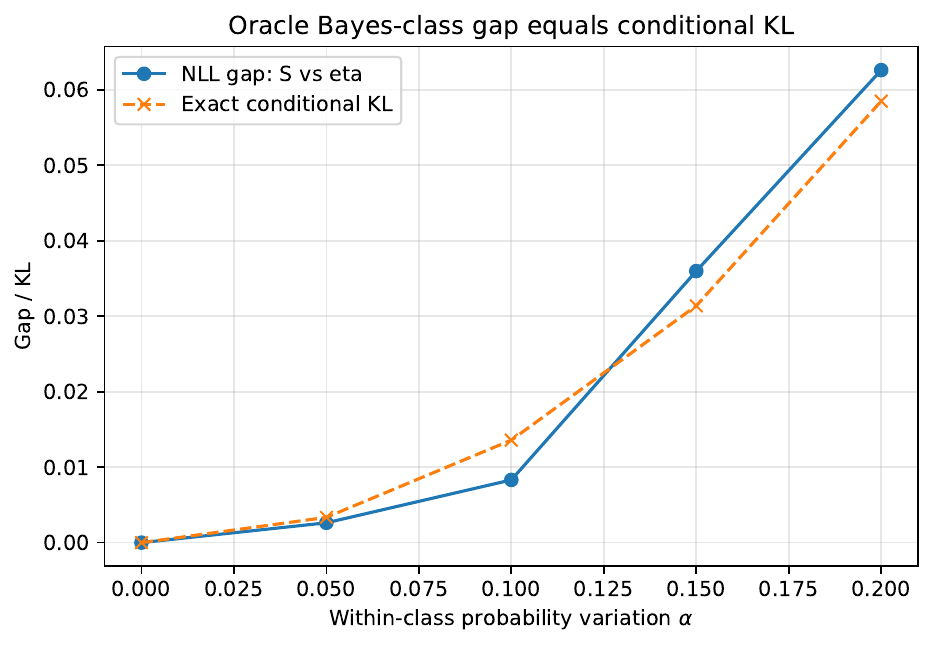}
\includegraphics[width=.31\textwidth]{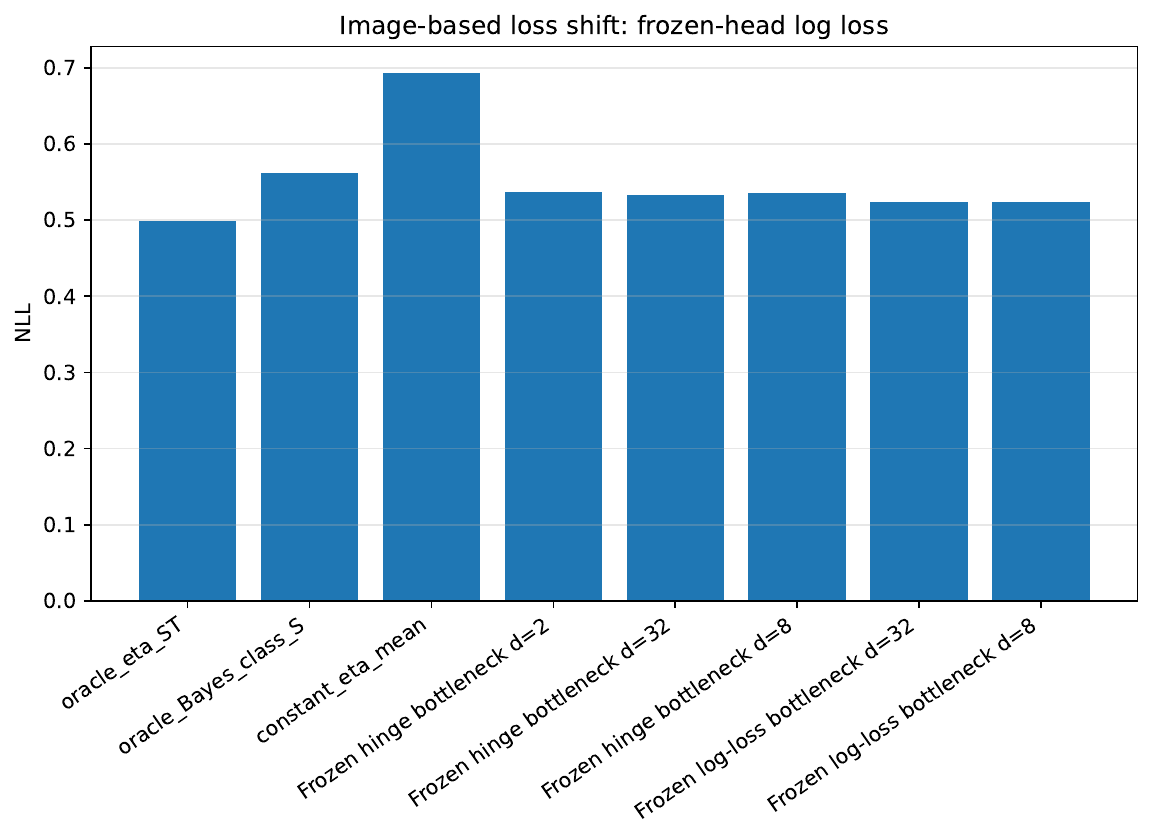}
\includegraphics[width=.31\textwidth]{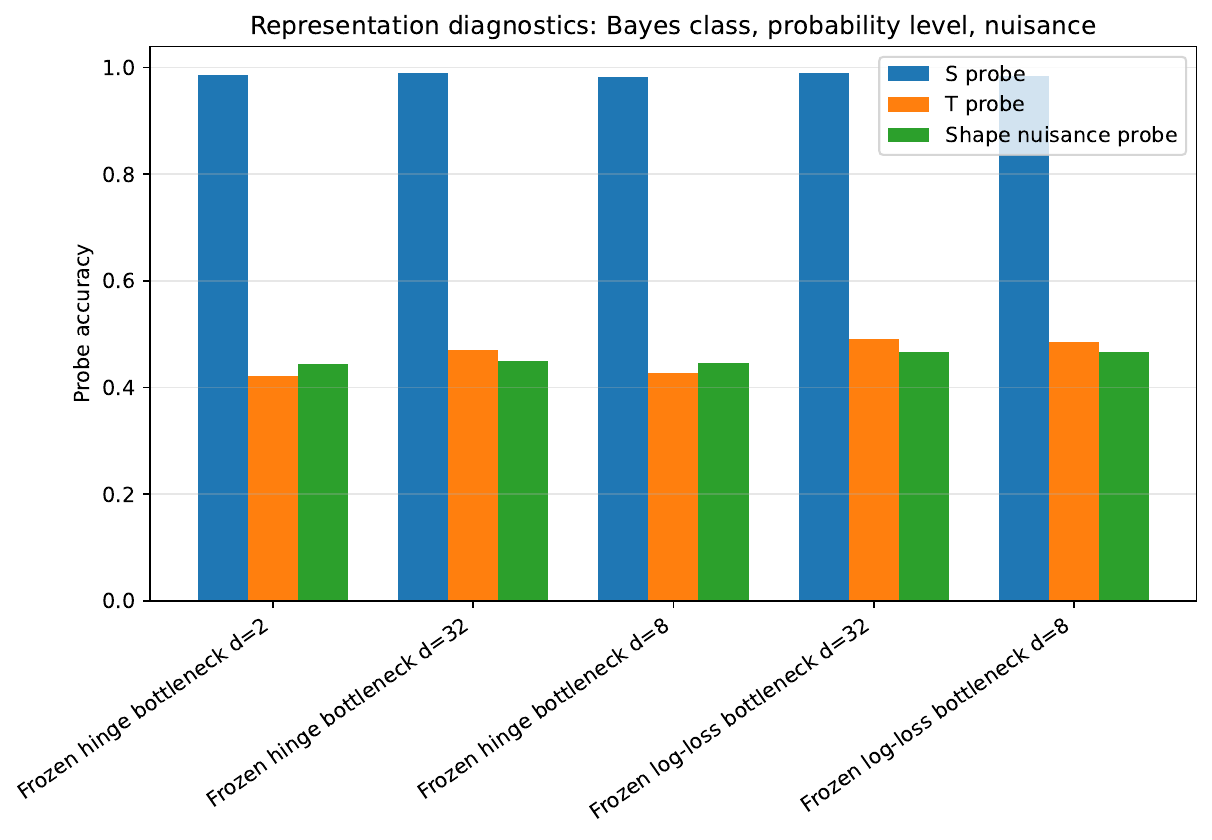}
\caption{
Main dSprites loss-shift results. Left panel, the exact KL gap between the full posterior oracle \(\eta(S,T)\) and the \(S\)-only Bayes-class oracle increases with \(\alpha\), confirming that \(T\) becomes increasingly relevant for proper-loss prediction while remaining irrelevant for the Bayes decision rule. Middle panel, at \(\alpha=0.20\), target NLL separates the full posterior oracle, the learned frozen representations, and the \(S\)-only Bayes-class oracle. Right panel, diagnostic probes show that all learned representations preserve the classification factor \(S\), and log-loss-trained representations make more information about the probability-level factor \(T\) decodable.
}
\label{fig:dsprites-main-results}
\end{figure*}

\begin{table*}[t]
\centering
\caption{
Learned frozen-representation results on dSprites at \(\alpha=0.20\). Encoders are trained under the source objective shown in the first column, then frozen; a new binary log-loss head is trained on top of the frozen representation. The probe columns report one-hidden-layer diagnostic probe accuracies for the latent factors. The \(\pm\) values denote \(1.96\) standard errors over three seeds. All methods preserve the classification factor \(S\), and log-loss-trained representations make more information about the probability-level factor \(T\) decodable, leading to lower target NLL and Brier score.
}
\label{tab:dsprites-learned-main}
\resizebox{\textwidth}{!}{
\begin{tabular}{lccccccc}
\toprule
Method & Acc. & NLL & Brier & ECE & \(S\) probe & \(T\) probe & Shape probe \\
\midrule
Frozen hinge, \(d=2\)
& \(0.743 \pm 0.001\)
& \(0.536 \pm 0.002\)
& \(0.179 \pm 0.001\)
& \(0.035 \pm 0.007\)
& \(0.986 \pm 0.004\)
& \(0.422 \pm 0.022\)
& \(0.445 \pm 0.012\) \\
Frozen hinge, \(d=8\)
& \(0.742 \pm 0.001\)
& \(0.536 \pm 0.002\)
& \(0.179 \pm 0.001\)
& \(0.032 \pm 0.003\)
& \(0.982 \pm 0.010\)
& \(0.428 \pm 0.021\)
& \(0.445 \pm 0.004\) \\
Frozen hinge, \(d=32\)
& \(0.743 \pm 0.002\)
& \(0.533 \pm 0.008\)
& \(0.177 \pm 0.003\)
& \(0.033 \pm 0.004\)
& \(0.989 \pm 0.001\)
& \(0.471 \pm 0.036\)
& \(0.449 \pm 0.014\) \\
Frozen log-loss, \(d=8\)
& \(0.744 \pm 0.002\)
& \(0.524 \pm 0.003\)
& \(0.174 \pm 0.002\)
& \(0.024 \pm 0.002\)
& \(0.984 \pm 0.008\)
& \(0.485 \pm 0.010\)
& \(0.467 \pm 0.002\) \\
Frozen log-loss, \(d=32\)
& \(0.745 \pm 0.002\)
& \(0.524 \pm 0.001\)
& \(0.174 \pm 0.000\)
& \(0.028 \pm 0.008\)
& \(0.990 \pm 0.001\)
& \(0.490 \pm 0.013\)
& \(0.466 \pm 0.005\) \\
\bottomrule
\end{tabular}
}
\end{table*}

These results support the same interpretation as the controlled and learned bottleneck experiments. The \(S\)-probe accuracies are close to one for all source-training conditions, confirming that all methods preserve the factor needed for classification. The \(T\)-probe accuracies are substantially above chance and are higher for the log-loss representations than for the compressed hinge representations. Since \(T\) is irrelevant to the Bayes class and relevant to \(\eta(S,T)\), this diagnostic pattern supports the interpretation that proper-loss source training preserves probability-relevant information that classification-style source training can compress away. The downstream NLL and Brier scores follow the same ordering. Log-loss-trained frozen representations perform better under the target proper loss than compressed hinge representations with comparable classification accuracy.

\subsection{CIFAR-10H human-label experiment}
\label{subsec:exp-cifar10h}

The final experiment tests the same mechanism on real images with human soft labels. CIFAR-10H provides an empirical distribution \(p_i\in\Delta^{10}\) of human labels for each image in the CIFAR-10 test set \citep{krizhevsky2009learning,peterson2019human}. The input images are fixed throughout. Viewing a human response as
\[
Y\mid X=x_i \sim p_i,
\]
the source classification quotient is the majority label
\[
y_i^{\mathrm{maj}}=\arg\max_y p_i(y),
\]
whereas the log-loss target requires the full vector \(p_i\). This gives a real-image analogue of the accuracy-to-probability-estimation setting. The same image can have an unambiguous majority label while still carrying human uncertainty that is relevant to the target loss.

Models are trained on a fixed split of the \(10{,}000\) CIFAR-10H images. A hard-label source encoder is trained with cross-entropy against \(y_i^{\mathrm{maj}}\), then frozen and equipped with a new soft-label head trained against \(p_i\). This is compared with temperature scaling of the hard classifier, hard-label bottlenecks of dimensions \(d\in\{2,8,32,128\}\), a soft-label source encoder trained directly against \(p_i\), and a hard-label encoder fine-tuned under soft-label log loss. Two oracles are included, namely the human-distribution oracle \(p_i\) and the majority-class quotient oracle that predicts the average human distribution conditional on \(y_i^{\mathrm{maj}}\). Results are averaged over three seeds for learned models; additional implementation details and the full table are reported in Appendix~\ref{app:cifar10h}.

\begin{table*}[t]
\centering
\caption{
CIFAR-10H real-image loss-shift experiment. Accuracy is measured against the majority human label, while NLL and KL are evaluated against the full human label distribution. Learned-model intervals are mean \(\pm 1.96\) standard errors over three seeds. The soft-label encoder gives substantially lower soft-label NLL and KL than the hard-label frozen encoder, despite similar majority-label accuracy.
}
\label{tab:cifar10h-main}
\resizebox{\textwidth}{!}{
\begin{tabular}{lccc}
\toprule
Method & Majority-label acc. & Soft-label NLL & KL to human labels \\
\midrule
Human distribution oracle
& 1.000 & 0.147 & 0.000 \\
Majority-class quotient oracle
& 1.000 & 0.243 & 0.095 \\
Hard encoder + soft head
& \(0.701 \pm 0.040\)
& \(0.936 \pm 0.097\)
& \(0.789 \pm 0.097\) \\
Hard encoder + temperature scaling
& \(0.674 \pm 0.068\)
& \(0.994 \pm 0.150\)
& \(0.847 \pm 0.150\) \\
Hard bottleneck \(d=2\)
& \(0.405 \pm 0.119\)
& \(1.515 \pm 0.213\)
& \(1.367 \pm 0.213\) \\
Hard bottleneck \(d=8\)
& \(0.680 \pm 0.024\)
& \(1.003 \pm 0.073\)
& \(0.856 \pm 0.073\) \\
Soft encoder + soft head
& \(0.765 \pm 0.025\)
& \(0.769 \pm 0.052\)
& \(0.622 \pm 0.052\) \\
Hard encoder fine-tuned
& \(0.753 \pm 0.047\)
& \(0.870 \pm 0.057\)
& \(0.722 \pm 0.057\) \\
\bottomrule
\end{tabular}
}
\end{table*}

Table~\ref{tab:cifar10h-main} shows the main effect. The majority-class quotient oracle has the same majority-label accuracy as the human-distribution oracle, but incurs an additional KL of \(0.095\) nats because it discards within-class human uncertainty. Among learned representations, the hard-label frozen encoder and the soft-label frozen encoder have comparable majority-label accuracies, but the soft-label encoder is substantially better under the target proper loss. NLL decreases from \(0.936\) to \(0.769\), and KL to the human distribution decreases from \(0.789\) to \(0.622\). Temperature scaling does not close this gap. Fine-tuning the hard encoder under the target loss improves NLL relative to the frozen hard encoder, indicating that the limitation is tied to freezing rather than absence of signal in the images.

\begin{figure*}[t]
\centering
\includegraphics[width=.70\textwidth]{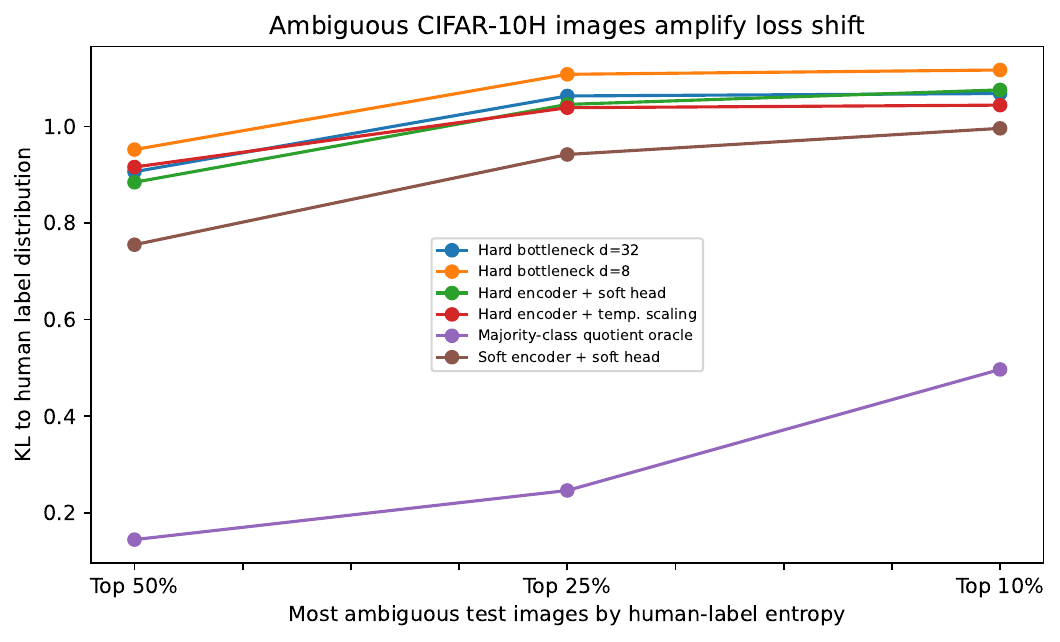}
\caption{
CIFAR-10H entropy-stratified loss-shift diagnostic. Restricting evaluation to the most ambiguous images increases the oracle majority-quotient gap and amplifies the learned soft-label loss differences. The soft-label source encoder remains the best learned frozen representation across entropy subsets.
}
\label{fig:cifar10h-main}
\end{figure*}

Figure~\ref{fig:cifar10h-main} further isolates the role of human-label ambiguity. On the top \(10\%\) highest-entropy images, the majority-class quotient oracle incurs KL \(0.497\), compared with \(0.144\) on the top \(50\%\). The learned models show the same qualitative pressure. Ambiguous images are where soft-label prediction is most difficult, and source training against the full human distribution gives the best frozen representation among the learned encoders. The bottleneck sweep provides a complementary compression check. The \(d=2\) hard bottleneck collapses both accuracy and soft-label prediction, while \(d=8,32,128\) recover much of the majority-label accuracy but remain inferior to the soft-label encoder under KL and NLL; the full sweep is reported in Appendix~\ref{app:cifar10h}. This is the real-image counterpart of the earlier quotient story. Source training for a hard classification objective can preserve top-label information while losing probability-relevant distinctions needed for a finer target loss.

\section{Limitations}
\label{sec:limitations}

The results isolate a specific representation-level obstruction to frozen transfer. The lower bound applies when the target loss strictly refines the source-loss Bayes quotient and the frozen representation is source-minimal, or compressed toward that quotient. This does not imply that every source-trained representation fails; representations that retain target-relevant information can transfer successfully. The dSprites and CIFAR-10H experiments use this contrast directly, comparing compressed classification representations with log-loss-trained representations that preserve more probability-relevant information.

The formal theory is stated in the Bayes-quotient setting. This setting covers the standard unique-Bayes-action cases used throughout the paper, including zero-one loss without ties and finite-label log loss. Losses with nonunique Bayes actions require a set-valued or common-action formulation of the quotient. This extension is left to future work.

The exact information-theoretic identity is specialized to finite-output log loss. This case is central for probabilistic classification and calibration. Other target losses may require different regret decompositions. Proper scoring rules, Bregman losses, quantile losses, and structured prediction losses may admit analogous quotient refinements, with excess-risk expressions that take forms beyond conditional mutual information.

The experiments are designed to isolate the mechanism rather than estimate its prevalence. The first two settings are controlled models, the dSprites experiment uses known image factors to fix \(P(X,Y)\) while changing the training and evaluation loss, and CIFAR-10H uses fixed images with human soft-label distributions. Quantifying how often loss shift appears in large-scale pretraining is a broader empirical question. The framework here provides a diagnostic. When a frozen representation is compressed relative to the target-loss quotient, downstream heads cannot recover the discarded target-relevant information.

Finally, by optimizing over all measurable heads on a frozen representation, the theory isolates the representation-level component of transfer failure from finite-sample, head-class, and optimization effects. These effects can coexist in practice, but a positive measurable-head gap is a representation-level transfer penalty that persists even with unrestricted downstream head capacity for the fixed representation.

\section{Conclusion}
\label{sec:conclusion}

This paper studied transfer failure under a fixed data distribution. The joint law \(P(X,Y)\) was held fixed throughout, and the predictive loss changed. In this setting, transfer failure arises when the target loss asks for information beyond what the source loss required.

The main object was the loss-induced Bayes quotient. For a fixed distribution and loss, this quotient captures exactly the information in \(X\) needed for Bayes-optimal prediction. It provides a representation-level notion of Bayes-relevant information and allows losses to be ordered by refinement. Under strict refinement, a representation that is minimal for the source loss is insufficient for the target loss. The obstruction holds even when the downstream head ranges over all measurable predictors.

For log loss on finite output spaces, the lower bound becomes exact. The frozen-transfer excess risk is
\[
R^\star_{\log,P}(H)-R^\star_{\log,P}(X)
=
I(Y;X\mid H),
\]
or equivalently the expected conditional KL divergence between \(P(Y\mid X)\) and \(P(Y\mid H)\). Thus the log-loss transfer penalty is precisely the predictive information about \(Y\) discarded by the frozen representation. The binary classification-to-log-loss case makes the mechanism transparent. Accuracy requires the Bayes class, and log loss requires the conditional probability.

The experiments support this interpretation. In the controlled model, the Bayes-class representation achieves the same classification accuracy as the full representation and incurs a positive log-loss gap that matches the exact conditional-KL prediction. In the learned bottleneck experiment, a compressed classification representation exhibits the same signature of unchanged accuracy, worse frozen log loss, and recovery when the encoder is fine-tuned under the target loss. In the dSprites construction, compressed hinge representations retain the classification-relevant factor and preserve less of the probability-relevant scale factor than log-loss representations. In CIFAR-10H, hard-label frozen encoders and compressed bottlenecks remain worse than soft-label encoders at predicting human label distributions, especially on ambiguous images.

The broader message is that the usefulness of a representation is determined by the pair \((P,\ell)\). A representation can be optimal for one loss and intrinsically insufficient for another, even when source and target examples are drawn from the same distribution. Loss shift therefore complements distribution shift as a basic mechanism in transfer learning. It identifies when frozen transfer fails because the objective has changed the meaning of the information that must be preserved.

\bibliography{references}

\appendix

\section{Additional details for the image-based loss-shift experiment}
\label{app:image-loss-shift}

\paragraph{Dataset and splits.}
The experiment uses the dSprites dataset with five scale levels \(T\in\{0,\ldots,4\}\), and the binary factor \(S\) is defined by thresholding the horizontal position at the midpoint. Images are \(64\times64\) binary arrays encoded as one-channel floating point tensors. The train, validation, and test splits contain 800, 160, and 320 examples per \((S,T)\) cell, respectively, giving 8{,}000 source/downstream training examples, 1{,}600 validation examples, and 3{,}200 test examples at each value of \(\alpha\). The balanced subset is sampled with seed 12345. Labels are sampled once with label seed 777. Unless otherwise stated, results use training seeds \(0,1,2\).

\paragraph{Latent factors.}
The factor \(S\) determines the Bayes class. The factor \(T\) changes the conditional probability \(\eta(S,T)\) while preserving the sign of \(\eta(S,T)-1/2\). Consequently,
\[
Q_{0/1}(X) = S,
\qquad
Q_{\log}(X) = (S,T).
\]
The shape factor is excluded from the label-generation mechanism and included as an additional diagnostic probe.

\paragraph{Training conditions.}
The following five source-training conditions are compared at the main value \(\alpha=0.20\).
\[
\begin{gathered}
\text{hinge } d=2,\quad
\text{hinge } d=8,\quad
\text{hinge } d=32,\\
\text{log-loss } d=8,\quad
\text{log-loss } d=32.
\end{gathered}
\]
The hinge \(d=2\) and \(d=8\) settings use i.i.d. Gaussian representation noise \(\mathcal N(0,0.2^2 I)\) during source training. The \(d=32\) hinge and log-loss settings use zero representation noise. The hinge source loss uses margin one, and the log-loss source loss uses binary cross-entropy with logits. Encoders are trained for 18 epochs. After source training, the encoder is frozen and a new binary log-loss head is trained for 12 epochs. Diagnostic probes for \(S\), \(T\), and shape are trained for 10 epochs.

\paragraph{Architectures and optimization.}
The encoder is a four-block convolutional network with channel widths \(16,32,64,64\), kernel size \(4\), stride \(2\), padding \(1\), and ReLU activations, followed by a 128-unit ReLU layer and a linear bottleneck of dimension \(d\). The source head is linear on the bottleneck. The frozen downstream head is a two-hidden-layer MLP \(d\to64\to64\to1\) with ReLU activations. Each diagnostic probe is a one-hidden-layer MLP \(d\to h\to k\), where \(h=\max(32,2d)\), \(k=1\) for the binary \(S\) probe, \(k=5\) for \(T\), and \(k=3\) for shape. Training uses AdamW with learning rate \(10^{-3}\), weight decay \(10^{-4}\), and batch size 256 for source encoders, downstream heads, and probes. Source encoders and downstream heads are selected by validation NLL; probes are selected by validation accuracy. ECE is computed with 15 equal-width bins.

\paragraph{Oracle metrics across \(\alpha\).}
Table~\ref{tab:oracle-alpha-sweep} reports oracle performance across the alpha sweep. The two oracles have identical accuracy within each value of \(\alpha\), because they induce the same Bayes decisions. Small changes in the reported accuracy across \(\alpha\) reflect the sampled stochastic labels and finite test set. The exact KL gap between the \(S\)-only predictor and the full posterior increases with \(\alpha\), showing that \(T\) becomes increasingly relevant for probability prediction.

\begin{table*}[t]
\centering
\caption{
Oracle metrics across \(\alpha\). The full posterior oracle uses \(\eta(S,T)\); the Bayes-class oracle uses only \(S\). The exact KL gap is measured relative to the full posterior oracle.
}
\label{tab:oracle-alpha-sweep}
\begin{tabular}{llccccc}
\toprule
\(\alpha\) & Oracle & Acc. & NLL & Brier & ECE & KL gap \\
\midrule
0.00 & \(\eta(S,T)\) & 0.759 & 0.552 & 0.183 & 0.009 & 0.000 \\
0.00 & \(S\)-only & 0.759 & 0.552 & 0.183 & 0.009 & 0.000 \\
0.05 & \(\eta(S,T)\) & 0.744 & 0.566 & 0.189 & 0.007 & 0.000 \\
0.05 & \(S\)-only & 0.744 & 0.569 & 0.190 & 0.006 & 0.003 \\
0.10 & \(\eta(S,T)\) & 0.753 & 0.551 & 0.183 & 0.023 & 0.000 \\
0.10 & \(S\)-only & 0.753 & 0.559 & 0.186 & 0.007 & 0.014 \\
0.15 & \(\eta(S,T)\) & 0.733 & 0.545 & 0.183 & 0.022 & 0.000 \\
0.15 & \(S\)-only & 0.733 & 0.581 & 0.196 & 0.017 & 0.031 \\
0.20 & \(\eta(S,T)\) & 0.751 & 0.499 & 0.166 & 0.019 & 0.000 \\
0.20 & \(S\)-only & 0.751 & 0.562 & 0.187 & 0.006 & 0.059 \\
\bottomrule
\end{tabular}
\end{table*}

\paragraph{Calibration note.}
Empirical ECE can favor coarser or less variable predictors depending on the binning and sample realization, because it is a binned finite-sample diagnostic. The loss-aligned oracle comparison is clearest in NLL, Brier score, and the exact KL gap, which directly measure the information lost by replacing \(\eta(S,T)\) with an \(S\)-only predictor.

\paragraph{Interpretation of probes.}
The \(S\)-probe accuracies are close to one for all learned representations, confirming that all methods preserve the factor needed for classification. The \(T\)-probe accuracies are substantially above chance and are higher for the log-loss representations than for the compressed hinge representations. Since \(T\) is irrelevant to the Bayes class and relevant to \(\eta(S,T)\), this supports the interpretation that proper-loss source training preserves probability-relevant information that classification-style source training can compress away.

\paragraph{Additional figures.}
The following figures are included in the supplementary material.
\begin{itemize}
    \item \texttt{image\_loss\_shift\_oracle\_exact\_kl.pdf} reports the exact oracle KL gap across \(\alpha\).
    \item \texttt{image\_loss\_shift\_alpha\_sweep\_gap.pdf} reports frozen-representation target-log-loss behavior across \(\alpha\).
    \item \texttt{image\_loss\_shift\_accuracy\_bar.pdf} gives the accuracy comparison at \(\alpha=0.20\).
    \item \texttt{image\_loss\_shift\_nll\_bar.pdf} gives the NLL comparison at \(\alpha=0.20\).
    \item \texttt{image\_loss\_shift\_probe\_bar.pdf} gives the diagnostic-probe accuracies at \(\alpha=0.20\).
\end{itemize}

\section{Additional details for the CIFAR-10H experiment}
\label{app:cifar10h}

\paragraph{Dataset and fixed distribution.}
The experiment uses the \(10{,}000\) CIFAR-10 test images together with the CIFAR-10H human label distributions \citep{krizhevsky2009learning,peterson2019human}. The CIFAR-10H probability vector for image \(i\) is denoted \(p_i\in\Delta^{10}\). The majority label is \(y_i^{\mathrm{maj}}=\arg\max_y p_i(y)\). The empirical distribution is fixed by sampling images uniformly from the CIFAR-10H image set and, for the target-log-loss interpretation, sampling a human response according to \(p_i\).

\paragraph{Splits and training.}
The \(10{,}000\) images are split once into \(60\%\) training, \(20\%\) validation, and \(20\%\) test using split seed \(123\). Learned-model results use training seeds \(0,1,2\). Source encoders are trained for at most \(80\) epochs, hard bottlenecks for at most \(60\) epochs, downstream soft-label heads for at most \(120\) epochs, and soft-label fine-tuning for at most \(60\) epochs, with validation-based early stopping using patience \(15\). Training uses AdamW, learning rate \(10^{-3}\) for source encoders and downstream heads, weight decay \(10^{-4}\), batch size \(128\), random crop and horizontal flip during source training, and standard CIFAR-10 normalization.

\paragraph{Architectures.}
The base encoder is a small convolutional network with three convolutional stages of widths \(64,128,256\), batch normalization, ReLU activations, max pooling after the first two stages, adaptive average pooling, and a \(256\)-dimensional feature projection. A source classifier is linear on these features. Frozen soft heads are two-layer MLP heads on top of the frozen representation. The hard bottleneck conditions freeze the hard-label source encoder and train a ReLU bottleneck of dimension \(d\in\{2,8,32,128\}\) with a hard-label classifier; Gaussian representation noise with standard deviation \(0.05\) is used while training the bottleneck. After source training, the bottleneck is frozen and a soft-label head is trained against \(p_i\). Temperature scaling fits one scalar temperature on the validation split by minimizing soft-label cross-entropy.

\paragraph{Metrics.}
For a model prediction \(q_i\), the reported soft-label NLL is
\[
-\sum_y p_i(y)\log q_i(y),
\]
and KL to the human label distribution is
\[
D_{\mathrm{KL}}(p_i\|q_i)
=
-\sum_y p_i(y)\log q_i(y)
-\Ent(p_i).
\]
Accuracy is measured against \(y_i^{\mathrm{maj}}\). Brier score is \(\sum_y(q_i(y)-p_i(y))^2\). ECE is computed with \(15\) equal-width confidence bins against \(y_i^{\mathrm{maj}}\) and is treated as a calibration diagnostic rather than the loss-aligned target metric.

\paragraph{Entropy stratification.}
Human-label entropy is \(\Ent(p_i)=-\sum_y p_i(y)\log p_i(y)\). Since many CIFAR-10H rows are deterministic, ordinary quantile binning creates repeated entropy edges. The entropy-stratified table therefore uses three bins, i.e., \(H=0\), positive entropy below the positive-entropy median \(0.1705\), and entropy above that median. The top-entropy analysis in Figure~\ref{fig:cifar10h-main} additionally reports the top \(50\%\), \(25\%\), and \(10\%\) most ambiguous test images by \(\Ent(p_i)\).

\begin{table*}[t]
\centering
\caption{
Full CIFAR-10H metrics. Learned-model intervals are mean \(\pm 1.96\) standard errors over three seeds. Oracle rows are deterministic for the fixed test split.
}
\label{tab:cifar10h-full}
\resizebox{\textwidth}{!}{
\begin{tabular}{lccccc}
\toprule
Method & Accuracy & Soft NLL & KL & Brier & ECE \\
\midrule
Human distribution oracle
& 1.0000 & 0.1475 & 0.0000 & 0.0000 & 0.0441 \\
Majority-class quotient oracle
& 1.0000 & 0.2426 & 0.0952 & 0.0131 & 0.0456 \\
Train mean
& 0.1020 & 2.3027 & 2.1552 & 0.8278 & 0.0028 \\
Hard encoder + temperature scaling
& \(0.6738 \pm 0.0684\)
& \(0.9941 \pm 0.1497\)
& \(0.8467 \pm 0.1497\)
& \(0.3905 \pm 0.0716\)
& \(0.0317 \pm 0.0049\) \\
Hard encoder + soft head
& \(0.7007 \pm 0.0402\)
& \(0.9363 \pm 0.0971\)
& \(0.7888 \pm 0.0971\)
& \(0.3628 \pm 0.0480\)
& \(0.0270 \pm 0.0104\) \\
Hard bottleneck \(d=2\)
& \(0.4050 \pm 0.1185\)
& \(1.5149 \pm 0.2125\)
& \(1.3675 \pm 0.2125\)
& \(0.6237 \pm 0.0824\)
& \(0.0396 \pm 0.0179\) \\
Hard bottleneck \(d=8\)
& \(0.6797 \pm 0.0240\)
& \(1.0031 \pm 0.0727\)
& \(0.8557 \pm 0.0727\)
& \(0.3900 \pm 0.0316\)
& \(0.0280 \pm 0.0087\) \\
Hard bottleneck \(d=32\)
& \(0.6852 \pm 0.0499\)
& \(0.9641 \pm 0.1141\)
& \(0.8166 \pm 0.1141\)
& \(0.3762 \pm 0.0549\)
& \(0.0288 \pm 0.0032\) \\
Hard bottleneck \(d=128\)
& \(0.6937 \pm 0.0331\)
& \(0.9484 \pm 0.0964\)
& \(0.8009 \pm 0.0964\)
& \(0.3686 \pm 0.0448\)
& \(0.0335 \pm 0.0036\) \\
Soft encoder + soft head
& \(0.7652 \pm 0.0248\)
& \(0.7691 \pm 0.0516\)
& \(0.6216 \pm 0.0516\)
& \(0.2819 \pm 0.0291\)
& \(0.0258 \pm 0.0016\) \\
Hard encoder fine-tuned
& \(0.7528 \pm 0.0475\)
& \(0.8697 \pm 0.0572\)
& \(0.7222 \pm 0.0572\)
& \(0.3066 \pm 0.0438\)
& \(0.0763 \pm 0.0110\) \\
\bottomrule
\end{tabular}
}
\end{table*}

\paragraph{Additional CIFAR-10H figures.}
The following figures are included with the source.
\begin{itemize}
    \item \texttt{cifar10h\_accuracy\_vs\_kl.pdf} shows majority-label accuracy versus KL to human label distributions.
    \item \texttt{cifar10h\_soft\_nll\_bar.pdf} shows the soft-label NLL comparison.
    \item \texttt{cifar10h\_kl\_bar.pdf} shows the KL comparison.
    \item \texttt{cifar10h\_entropy\_stratified\_kl.pdf} shows KL across the \(H=0\), positive-mid-entropy, and high-entropy bins.
    \item \texttt{cifar10h\_entropy\_top\_kl.pdf} shows KL on the top \(50\%\), \(25\%\), and \(10\%\) most ambiguous images.
    \item \texttt{cifar10h\_bottleneck\_sweep\_kl.pdf} shows hard-label bottleneck dimension versus KL.
\end{itemize}

\section{Proofs}
\label{app:methodology-proofs}

\subsection{Proof of Proposition~\ref{prop:loss-preorder}}
\label{app:proof-loss-preorder}

\begin{proof}
Let $\ell$ be any loss admitting a Bayes quotient under $P$. Since every sigma-algebra is contained in itself,
\begin{equation}
\sigma(q_{\ell,P}(X))
\subseteq
\sigma(q_{\ell,P}(X)).
\end{equation}
Hence $\ell\preceq_P\ell$, so $\preceq_P$ is reflexive.

Now let $\ell_1,\ell_2,\ell_3$ be losses admitting Bayes quotients under $P$ and satisfying
\begin{equation}
\ell_1\preceq_P\ell_2
\qquad\text{and}\qquad
\ell_2\preceq_P\ell_3.
\end{equation}
By definition,
\begin{equation}
\sigma(q_{\ell_1,P}(X))
\subseteq
\sigma(q_{\ell_2,P}(X))
\end{equation}
and
\begin{equation}
\sigma(q_{\ell_2,P}(X))
\subseteq
\sigma(q_{\ell_3,P}(X)).
\end{equation}
By transitivity of set inclusion,
\begin{equation}
\sigma(q_{\ell_1,P}(X))
\subseteq
\sigma(q_{\ell_3,P}(X)).
\end{equation}
Therefore $\ell_1\preceq_P\ell_3$, so $\preceq_P$ is transitive.

Finally, on equivalence classes under $\sim_P$, antisymmetry holds by construction. If
\begin{equation}
[\ell_1]\preceq_P[\ell_2]
\qquad\text{and}\qquad
[\ell_2]\preceq_P[\ell_1],
\end{equation}
then
\begin{equation}
\sigma(q_{\ell_1,P}(X))
=
\sigma(q_{\ell_2,P}(X))
\qquad
\text{mod }P_X.
\end{equation}
Thus $\ell_1\sim_P\ell_2$, and hence $[\ell_1]=[\ell_2]$. Therefore $\preceq_P$ induces a partial order on equivalence classes.
\end{proof}

\subsection{Proof of Proposition~\ref{prop:loss-shift-obstruction}}
\label{app:proof-loss-shift-obstruction}

\begin{proof}
Since $\ell_1\prec_P\ell_2$, it follows that
\begin{equation}
\sigma(q_{\ell_1,P}(X))
\subseteq
\sigma(q_{\ell_2,P}(X))
\end{equation}
and
\begin{equation}
\sigma(q_{\ell_2,P}(X))
\not\subseteq
\sigma(q_{\ell_1,P}(X)).
\end{equation}
Since $H_1^\star$ is Bayes-minimal for $(P,\ell_1)$,
\begin{equation}
\sigma(H_1^\star)
=
\sigma(q_{\ell_1,P}(X))
\qquad
\text{mod }P_X.
\end{equation}
Therefore,
\begin{equation}
\sigma(q_{\ell_2,P}(X))
\not\subseteq
\sigma(H_1^\star).
\end{equation}
By the Bayes-quotient characterization recalled in Section~\ref{subsec:bayes-quotient-setting}, a representation $H$ is Bayes-sufficient for $(P,\ell_2)$ if and only if
\begin{equation}
\sigma(q_{\ell_2,P}(X))
\subseteq
\sigma(H).
\end{equation}
Applying this criterion to $H=H_1^\star$, the preceding non-inclusion implies that $H_1^\star$ is not Bayes-sufficient for $(P,\ell_2)$.
\end{proof}

\subsection{Proof of Theorem~\ref{thm:exact-log-loss-gap}}
\label{app:proof-exact-log-loss-gap}

\begin{proof}
Let $c:\mathcal H\to\Delta(\mathcal Y)$ be any measurable head. The log risk of the predictor $c(H)$ is
\begin{equation}
\mathbb E[-\log c(H)(Y)].
\end{equation}
Conditioning on $H$ gives
\begin{equation}
\mathbb E[-\log c(H)(Y)]
=
\mathbb E\left[
\mathbb E[-\log c(H)(Y)\mid H]
\right].
\end{equation}
For a fixed representation value $H=u$, the conditional law of $Y$ is $\pi_H(u)$. Hence
\begin{equation}
\mathbb E[-\log c(H)(Y)\mid H=u]
=
-\sum_{y\in\mathcal Y}
\pi_H(u)(y)\log c(u)(y).
\end{equation}
This cross-entropy decomposes as
\begin{equation}
-\sum_{y\in\mathcal Y}
\pi_H(u)(y)\log c(u)(y)
=
-\sum_{y\in\mathcal Y}
\pi_H(u)(y)\log \pi_H(u)(y)
+
D_{\mathrm{KL}}(\pi_H(u)\|c(u)).
\end{equation}
Since $D_{\mathrm{KL}}(\pi_H(u)\|c(u))\geq 0$, the minimum is attained by choosing
\begin{equation}
c^\star(u)=\pi_H(u)
\end{equation}
for $P_H$-almost every $u$. Therefore,
\begin{equation}
R^\star_{\log,P}(H)
=
\mathbb E[-\log \pi_H(Y)]
=
\Ent(Y\mid H).
\end{equation}

Taking $H=X$ gives
\begin{equation}
R^\star_{\log,P}
=
R^\star_{\log,P}(X)
=
\mathbb E[-\log \pi_X(Y)]
=
\Ent(Y\mid X).
\end{equation}
Thus
\begin{equation}
\mathcal E_{\log,P}(H)
=
\Ent(Y\mid H)-\Ent(Y\mid X).
\end{equation}

It remains to identify the entropy difference with the expected conditional KL divergence. The difference satisfies
\begin{equation}
\Ent(Y\mid H)-\Ent(Y\mid X)
=
\mathbb E[-\log \pi_H(Y)+\log \pi_X(Y)].
\end{equation}
Conditioning on $X$, and using that $H=h(X)$, the random variable $\pi_H$ is $\sigma(X)$-measurable. Hence
\begin{equation}
\mathbb E[-\log \pi_H(Y)+\log \pi_X(Y)\mid X]
=
\sum_{y\in\mathcal Y}
\pi_X(y)
\log\frac{\pi_X(y)}{\pi_H(y)}.
\end{equation}
The right-hand side is $D_{\mathrm{KL}}(\pi_X\|\pi_H)$. Taking expectations yields
\begin{equation}
\Ent(Y\mid H)-\Ent(Y\mid X)
=
\mathbb E\left[
D_{\mathrm{KL}}(\pi_X\|\pi_H)
\right].
\end{equation}

Finally, since $H=h(X)$, $\sigma(H)\subseteq\sigma(X)$. For finite $\mathcal Y$,
\begin{equation}
I(Y;X\mid H)
=
\Ent(Y\mid H)-\Ent(Y\mid X).
\end{equation}
Therefore,
\begin{equation}
\mathcal E_{\log,P}(H)
=
I(Y;X\mid H).
\end{equation}
\end{proof}

\subsection{Proof of Proposition~\ref{prop:binary-accuracy-calibration-gap}}
\label{app:proof-binary-accuracy-calibration-gap}

\begin{proof}
Binary log loss elicits the conditional probability
\begin{equation}
\eta(X)=P(Y=1\mid X).
\end{equation}
Therefore, a representation $H$ is sufficient for binary log loss if and only if $\eta(X)$ is measurable with respect to $\sigma(H)$. By assumption, $\eta(X)$ is not measurable with respect to $\sigma(H^\star_{0/1})$. Hence $H^\star_{0/1}$ is not sufficient for binary log loss.

Next,
\begin{equation}
P(Y=1\mid H^\star_{0/1})
=
\mathbb E[Y\mid H^\star_{0/1}].
\end{equation}
Since $\eta(X)=\mathbb E[Y\mid X]$ and $H^\star_{0/1}$ is a measurable function of $X$, the tower property gives
\begin{equation}
\mathbb E[Y\mid H^\star_{0/1}]
=
\mathbb E[\mathbb E[Y\mid X]\mid H^\star_{0/1}]
=
\mathbb E[\eta(X)\mid H^\star_{0/1}].
\end{equation}
Therefore,
\begin{equation}
P(Y=1\mid H^\star_{0/1})
=
\mathbb E[\eta(X)\mid H^\star_{0/1}].
\end{equation}

Applying Theorem~\ref{thm:exact-log-loss-gap} in the binary case yields
\begin{equation}
\mathcal E_{\log,P}(H^\star_{0/1})
=
\mathbb E\left[
D_{\mathrm{KL}}
\left(
\operatorname{Bernoulli}(\eta(X))
\,\middle\|\,
\operatorname{Bernoulli}(P(Y=1\mid H^\star_{0/1}))
\right)
\right].
\end{equation}
Substituting the preceding identity gives
\begin{equation}
\mathcal E_{\log,P}(H^\star_{0/1})
=
\mathbb E\left[
\mathrm{kl}
\left(
\eta(X)
\,\middle\|\,
\mathbb E[\eta(X)\mid H^\star_{0/1}]
\right)
\right].
\end{equation}

Finally, since $\eta(X)$ is not $\sigma(H^\star_{0/1})$-measurable, the defining property of conditional expectation implies
\begin{equation}
P\left(
\eta(X)
\neq
\mathbb E[\eta(X)\mid H^\star_{0/1}]
\right)>0.
\end{equation}
The binary KL divergence satisfies $\mathrm{kl}(u\|v)\geq 0$, with equality if and only if $u=v$ under the usual support conventions. Therefore the integrand is strictly positive on a set of positive probability, and hence
\begin{equation}
\mathcal E_{\log,P}(H^\star_{0/1})>0.
\end{equation}
\end{proof}

\end{document}